\newtheorem{theorem}{Theorem}
\title{Knowledge Transfer from Simple to Complex: A Safe and Efficient Reinforcement Learning Framework for Autonomous Driving Decision-Making
}
\author{
  Rongliang Zhou \textsuperscript{1}, Jiakun Huang \textsuperscript{1}, Mingjun Li \textsuperscript{1}, Hepeng Li \textsuperscript{1}, Haotian Cao \textsuperscript{2} \thanks{\textbf{Corresponding author}} , Xiaolin Song \textsuperscript{1}  \footnotemark[1]\\
  \textsuperscript{1} State Key Laboratory of Advanced Design and Manufacturing Technology for Vehicle, Hunan University\\
  \textsuperscript{2} College of Intelligence Science and Technology, National University of Defense Technology\\
}
\begin{document}
\maketitle

\makeatletter
\renewcommand{\@makefnmark}{}
\makeatother

\footnotetext{\texttt{Email addresses: zhourongliang@hnu.edu.cn (Rongliang Zhou), hjk0517@hnu.edu.cn (Jiakun Huang), mingjunl@hnu.edu.cn (Mingjun Li), lhphnu@hnu.edu.cn (Hepeng Li), caohaotian@nudt.edu.cn (Haotian Cao), jqysxl@hnu.edu.cn (Xiaolin Song)}}

\begin{abstract}
\label{abstract}
A safe and efficient decision-making system is crucial for autonomous vehicles. However, the complexity of driving environments limits the effectiveness of many rule-based and machine learning approaches. Reinforcement Learning (RL), with its robust self-learning capabilities and environmental adaptability, offers a promising solution to these challenges. Nevertheless, safety and efficiency concerns during training hinder its widespread application. To address these concerns, we propose a novel RL framework, Simple to Complex Collaborative Decision (S2CD). First, we rapidly train the teacher model in a lightweight simulation environment. In the more complex and realistic environment, teacher intervenes when the student agent exhibits suboptimal behavior by assessing actions' value to avert dangers. We also introduce an RL algorithm called Adaptive Clipping Proximal Policy Optimization Plus, which combines samples from both teacher and student policies and employs dynamic clipping strategies based on sample importance. This approach improves sample efficiency while effectively alleviating data imbalance. Additionally, we employ the Kullback-Leibler divergence as a policy constraint, transforming it into an unconstrained problem with the Lagrangian method to accelerate the student's learning. Finally, a gradual weaning strategy ensures that the student learns to explore independently over time, overcoming the teacher's limitations and maximizing performance. Simulation experiments in highway lane-change scenarios show that the S2CD framework enhances learning efficiency, reduces training costs, and significantly improves safety compared to state-of-the-art algorithms. This framework also ensures effective knowledge transfer between teacher and student models, even with suboptimal teachers, the student achieves superior performance, demonstrating the robustness and effectiveness of S2CD.
\end{abstract}



\keywords{Autonomous Vehicle \and Reinforcement Learning \and Knowledge Transfer \and Teacher-Student Framework \and Adaptive Clipping}


\section{Introduction}
\label{sec:introduction}
Given the unpredictable and complex nature of driving environments, nearly 94\% of traffic incidents are related to poor decision-making by human drivers \citep{alvaro2018driver}.
Suboptimal driving decisions negatively impact both the safety and efficiency of traffic flow \citep{lee2004comprehensive}.
With the rapid progress in autonomous driving technology, considerable interest has been drawn to its promise for enhancing both traffic safety and efficiency. 
Autonomous vehicle systems generally consist of a perception layer, a decision-making layer, and a planning and control layer, in which the decision-making layer commonly referred to as the "brain" of the system \citep{chen2017brain}.
Therefore, the performance of decision-making algorithms directly relates to the effectiveness of intelligent vehicle technologies \citep{li2017game}.
Currently, decision-making methods for autonomous vehicles are generally classified into two categories: knowledge-driven and data-driven.
Knowledge-driven methods, such as Hierarchical State Machines (HSM) \citep{patz2008practical}, Expert Systems (ES) \citep{ferguson2008reasoning}, and Finite State Machines (FSM) \citep{buehler2009darpa}, are known for their rigorous logic and high interpretability \citep{bae2020finite}.
However, these methods are limited by their strong reliance on existing knowledge, which restricts their ability to handle novel or unexpected scenarios \citep{bianco2005knowledge, mozina2008fighting}.
In contrast, data-driven methods such as End-to-End (E2E) Learning \citep{bojarski2016end}, Imitation Learning (IL) \citep{hawke2020urban}, and Reinforcement Learning (RL) \citep{sallab2017deep, zhang2022receding} have gained prominence due to their robust self-learning capabilities and adaptability to dynamic environments.
RL algorithms, in particular, optimize driving strategies through real-time interactions with the environment, making them highly effective in complex scenarios \citep{zhu2021survey}.
While RL has demonstrated success in fields such as game artificial intelligence (AI) \citep{zhang2022leader, ye2020mastering}, financial transactions \citep{tsantekidis2023modeling, pendharkar2018trading}, and energy management \citep{yu2019deep, kuznetsova2013reinforcement}, its application in safety-critical areas like autonomous driving and robotic control faces significant challenges due to safety risks and training inefficiency \citep{liu2022constrained, ou2024modular}.

RL, much like human behavior, can autonomously explore environments and optimize strategies based on environmental feedback \citep{subramanian2022reinforcement}.
However, traditional RL algorithms inevitably face risks during exploration, often encountering dangers before learning how to avoid them \citep{chow2018lyapunov}.
The cost of allowing a vehicle to undertake dangerous actions in real-world scenarios is prohibitively high, making it challenging to apply traditional RL algorithms in autonomous driving.
In human learning, particularly in hazardous situations, we do not rely solely on trial and error to acquire knowledge \citep{peng2022safe}.
Instead, we seek guidance from a teacher to ensure safety and efficiency during the learning process.
The teacher not only demonstrates correct actions when necessary but also directly corrects the student's mistakes to avert dangerous situations.
For instance, a driving instructor can take control of a vehicle when a novice driver makes an unsafe decision, thereby preventing accidents.
Consequently, numerous researchers have incorporated the Teacher-Student Framework (TSF) \citep{zimmer2014teacher, liu2023traffic} into RL, enabling student agents to safely and efficiently navigate complex scenarios through autonomous exploration and teacher guidance.
In the TSF, a high-performance neural network or human expert acts as the teacher, intervening when the student's actions meet specific criteria.
When an intervention occurs, the teacher's actions replace the student's actions in the replay buffer as training data, which is later used to optimize the student's model via RL algorithms.
However, human-in-the-loop TSF requires continuous expert monitoring, which is labor-intensive, and the effectiveness of human teachers can be influenced by subjective factors like emotions and mental state.
To overcome these issues, most TSF approaches rely on pre-training a high-performance teacher model using neural networks, allowing the student to learn correct behaviors from the teacher’s demonstrations.
Yet, in fields like robotic control and autonomous driving, obtaining such a high-performance teacher in advance can be extremely expensive.
Furthermore, in current TSF methods, if the teacher provides suboptimal demonstrations, the student may learn incorrect strategies, limiting its performance \citep{xue2023guarded}.
Another limitation of current TSF methods is that the RL algorithm can only utilize the Markov Decision Process (MDP) corresponding to the action chosen by either the teacher or the student at each step, leading to significant data wastage since alternative actions are not leveraged.

In summary, traditional RL algorithms involve high training costs and fail to ensure the safety of agents.
While standard TSF methods help student agents learn safely and efficiently, they come with significant costs in developing a high-performance teacher, and the student's ultimate performance is constrained by the teacher's abilities. 
Furthermore, these methods cannot optimize the policy by simultaneously utilizing samples generated by both the teacher and student models.
To tackle these shortcomings, this paper introduces an innovative TSF framework called Simple to Complex Collaborative Decision (S2CD), which enhances knowledge transfer between teacher and student.
Unlike other TSFs, the S2CD framework accelerates training by first training the teacher in a simple simulation environment with similar scenarios, then guiding the student in more complex simulations or real-world settings.
We also theoretically demonstrated that teacher guidance improves both the performance and safety ceilings of the student during training.
Additionally, we propose the Adaptive Clipping Proximal Policy Optimization Plus (ACPPO+), which builds upon our previous work, ACPPO \citep{zhou2024hybrid}, by leveraging data from both teacher and student models to further improve sample efficiency and mitigate data imbalance.
The algorithm adaptively adjusts the clipping factor based on sample importance, boosting learning efficiency.
During the policy optimization process, the neural network is updated by constraining the Kullback-Leibler (KL) divergence between the teacher’s and student’s policies, facilitating the rapid convergence of the student policy to the teacher’s.
Finally, S2CD employs a weaning strategy that allows the student agent to independently explore the environment in later training stages, thus overcoming the teacher’s limitations.
The main contributions of this paper are summarized as follows:
\begin{itemize}
\item
This paper presents a novel TSF called S2CD, which 
accelerates the training of a teacher model in a lightweight simulation environment and uses it to guide the student agent in learning safely and efficiently in more complex simulated or real-world environments.
\item
This paper proposes the ACPPO+ algorithm, integrated within the S2CD framework, which simultaneously utilizes data from both teacher and student models while adaptively adjusting the clipping factor according to sample importance, thus enhancing sample efficiency and mitigating data imbalance.
\item
This paper introduces the use of the KL divergence between the teacher's and student's policies as a constraint during policy updates, and transforms it into an unconstrained problem using the Lagrangian method to facilitate the rapid convergence of the student’s policy to the teacher’s.
\item
This paper relies on action values to trigger teacher policy intervention and gradually reduces the probability of intervention through a weaning mechanism, encouraging independent exploration by the student and mitigating the limitations imposed by suboptimal teacher performance.
\end{itemize}

\autoref{sec:related_works} reviews related work and analyzes the strengths and weaknesses of these studies to underscore the value of our research.
\autoref{sec:theoretical_background} provides a detailed explanation of the theoretical background of the algorithms, followed by an introduction to the S2CD decision-making framework.
\autoref{sec:implementation} describes the differences between the Highway-Env and Carla simulation environments and provides details on the implementation of the experiments.
\autoref{sec:simulation_result} shows the results of the simulation experiments and compares them with state-of-the-art (SOTA) algorithms.
\autoref{sec:conclusion} concludes the paper, explores the limitations of the study, and suggests directions for future research.

\section{Related Works}
\label{sec:related_works}
The central idea of this work is to facilitate the safe and efficient learning of RL agents through a knowledge transfer technique based on the TSF.
Therefore, this section reviews studies related to knowledge transfer and Safe RL, analyzing their strengths and weaknesses to highlight the innovative aspects and practical significance of this research.

\subsection{Knowledge Transfer}
While RL has been widely applied across various fields and has demonstrated impressive results, most research has primarily focused on simulation experiments.
However, in the context of autonomous driving, the ultimate goal is real-world application, where safety and efficiency during training process are critical factors in assessing algorithm performance.
Training standard RL algorithms on real vehicles would not only risk substantial damage but also incur significant time costs.
Knowledge transfer seeks to improve the learning efficiency of models by transferring knowledge from previously learned tasks to new tasks.
By leveraging existing policies or experiences, the new task can accelerate the learning process, thereby reducing the need for extensive data collection or large sample sizes.
This technique is especially valuable in environments where acquiring training data is costly or time-consuming.
Consequently, effectively transferring knowledge from simulation-based training to agents operating in real-world or more complex simulation environments has become a significant focus of research.

For example, \citet{qiao2018automatically} and \citet{song2021autonomous} employed Curriculum Learning (CL) strategies to achieve notable performance in urban intersection and overtaking tasks, respectively.
CL enhances model learning effectiveness by gradually increasing task difficulty, instead of requiring models to tackle complex tasks directly \citep{bengio2009curriculum}.
However, CL requires meticulously designed multi-level tasks.
Furthermore, the progression from simpler to more complex tasks can increase training time and computational resource demands.
Transfer Learning (TL) is another knowledge transfer technique that effectively reduces the training costs of the target task by applying knowledge acquired from the source task to the target task.
For instance, \citet{shu2021driving} combined RL with TL to transfer decision-making knowledge from one driving task to another in an intersection environment.
Similarly, \citet{akhauri2020enhanced} utilized simulated accident data for TL, enabling the model to generalize more rapidly and effectively to real-world scenarios.
However, if the target and source tasks are not adequately related, the performance of TL may deteriorate, potentially resulting in "negative transfer" \citep{zhang2022survey}.
Another approach to knowledge transfer is the use of Digital Twins (DT).
By creating an accurate digital replica of a physical entity or system, we can map data from the physical world to the virtual world \citep{niaz2021autonomous}, enabling real-time monitoring, simulation, analysis, and optimization of physical entities.
\citet{voogd2023reinforcement} employed DT technology to combine virtual and real-world data on vehicle dynamics and traffic scenarios, effectively bridging the gap between simulation and reality. 
\citet{wang2024digital} designed and implemented an E2E DT system for autonomous driving.
This system captures real-world traffic information, processes it in the cloud to create a digital twin model, and then utilizes this model to provide route planning services for autonomous vehicles.
Despite DT technology effectively bridging the gap between simulation training and real-world applications, accurate simulation processes remain indispensable.
Designing simulation programs that accurately mirror or fully replicate real-world conditions is both challenging and costly \citep{yun2021simulation}.
Moreover, pre-training a high-performance model in a high-fidelity simulation environment is resource-intensive.

Therefore, a key area of research focuses on developing methods to train models quickly and efficiently in simplified simulation environments.
Subsequently, effective knowledge transfer techniques are applied to enable agents to complete training safely and effectively in more complex simulations or real-world environments.

\subsection{Safe RL}
RL methods can enhance an agent's ability to avoid dangers by learning from previous training failures and developing efficient behavioral strategies to adapt to complex environments.
However, the application of these methods in safety-critical domains continues to present substantial safety challenges \citep{garcia2015comprehensive}.
Ensuring policy safety throughout the training process and preventing agents from encountering dangerous situations are key to expanding the applicability of RL to more fields.
Safe RL seeks to ensure that RL models avoid unsafe behaviors during both training and execution, particularly in high-risk or safety-critical domains such as autonomous driving.
Safe RL techniques focus on balancing the exploration of novel strategies with the imperative to maintain safety, ensuring that the learned policies adhere to required safety standards throughout the learning process.
Consequently, many researchers are focusing on incorporating constrained optimization, hard rule constraints, and risk assessment into RL algorithms to mitigate unsafe behaviors during both the learning and execution phases.

For instance, methods like SAC-Lag \citep{ha2021learning} and PPO-Lag \citep{stooke2020responsive} utilize the Lagrangian method, which simultaneously updates both the policy and the Lagrange multiplier through primal-dual optimization, thereby transforming the safety-constrained optimization problem into an unconstrained one.
\citet{peng2022model} proposed the Separated Proportional Integral Lagrangian (SPIL) algorithm, which mitigates oscillations and reduces conservatism in RL policies during car-following scenarios.
\citet{zhang2024safe} introduced cognitive uncertainty as a constraint in Lagrangian Safe RL algorithms, aiming to enhance both exploration and safety performance in autonomous vehicles.
While the Lagrangian method is effective in handling safety constraints, it lacks a clear mechanism to prevent hazardous events \citep{li2022efficient}.
Thus, agents may still engage in unsafe exploration during training.
Consequently, many researchers have applied rule-based constraints \citep{cao2022trustworthy,likmeta2020combining} or risk estimation methods \citep{mo2021safe, li2022decision} to Safe RL in autonomous driving, aiming to ensure a minimum level of policy safety performance.
Although these methods can prevent a significant number of unsafe actions, their effectiveness heavily depends on the quality of the rule design or risk assessment models.
To further reduce unsafe events, researchers have shifted their focus toward Safe RL approaches based on the TSF.
Within the TSF, teacher policies can be classified into human expert strategies and well-trained neural network strategies.
For example, \citet{wu2023toward} and \citet{huang2024human} employed human experts to mentor AI agents.
During the student agent's interaction with the environment, the human expert intervenes in dangerous situations and demonstrates the correct actions, thereby ensuring driving safety throughout the student's training process.
While Safe RL methods involving human expert intervention significantly improve the safety of the exploration process, they incur high labor costs.
Moreover, the teaching performance of human experts is highly influenced by subjective factors, leading to instability in the final model performance.
To reduce labor costs, many researchers have employed well-trained neural network models as "teachers" to guide student agents during the learning process.
For instance, \citet{peng2022safe} and \citet{xue2023guarded} employed RL algorithms to obtain a well-trained teacher model, when the conditions defined by a switching function, which determines whether to intervene, are met, the teacher takes over the student's actions to demonstrate correct operations and avoid dangerous situations.
\citet{zhou2024hybrid} employed the SOAR cognitive architecture as a teacher to guide the student agent toward efficient and safe learning, while also offering a certain degree of interpretability.

Model-based TSF eliminates the need for real-time monitoring by human experts but incurs the cost of training a high-performance model.
Moreover, the student's final performance is limited by the teacher's capabilities.
Therefore, although TSF can maximize agent safety during training and improve learning efficiency, these methods have not yet been widely adopted in the field of autonomous driving.

\section{Theoretical Background}
\label{sec:theoretical_background}
In this section, we offer a comprehensive explanation of the technical background underlying the S2CD framework.
First, we present the principle of the PPO algorithm.
Subsequently, we introduce the ACPPO+ algorithm.
Lastly, we offer a detailed description of the components within the S2CD framework.

\subsection{Proximal Policy Optimization Algorithm}
DRL methods are generally categorized into two types: value-based and policy-based methods.
The core principle of value-based methods is to learn a value function that estimates the long-term return for each state or state-action pair, which is then used to guide action selection.
In contrast, policy-based methods directly optimize a policy that maps states to actions, enabling decision-making based on this learned policy.
Although value-based RL methods perform well in many simple tasks, their efficiency and accuracy significantly decrease as the state or action space becomes more complex.
Conversely, policy-based methods provide a clear optimization pathway, ensuring convergence and enhancing exploration through stochastic policies.
For complex tasks, policy-based methods are generally more suitable and offer greater scalability.
Therefore, this study adopts policy-based RL methods.

The objective of RL is to identify a policy $\pi(a|s)$ that maximizes the cumulative discounted reward across all time steps, defined as:
\begin{equation}
	\centering
	G_{t} = \sum_{k=0}^{\infty}\gamma^{k}r_{t+k}
\end{equation}
\noindent
where $r_{t}$ represents the reward received at time step $t$, and $\gamma\in[0,1)$ is the discount factor, which determines the importance of future rewards.
In policy gradient methods, the objective is to optimize the policy $\pi(a|s)$ to maximize the expected cumulative reward, expressed as:
\begin{equation}
	\label{eq:J1}
	\centering
	J(\theta) = \mathbb{E}_{s_0} \left[V^{\pi_\theta}(s_{0}) \right] = \mathbb{E}_{\pi_\theta}\left[\sum_{t=0}^{\infty}\gamma^{t}r(s_{t}, a_{t})\right]
\end{equation}
\noindent
where $\mathbb{E}$ is the expectation for a batch of samples, $\pi_\theta$ denotes the current policy, with $\theta$ representing its parameters.
The state value function $V^\pi(s_t) = \mathbb{E}_\pi \left[ \sum_{k=0}^{\infty} \gamma^k r(s_{t+k}, a_{t+k}) \middle| s_t \right]$ represents the expected return when starting from state $s_t$ under policy $\pi_\theta$.

The optimization objective under the policy $\pi_\theta$, denoted as 
$J(\theta)$, can be expressed in the expected form of the new policy $\pi_{\theta'}$:
\begin{equation}
	\label{eq:J2}
	\begin{split}
		J(\theta) = \mathbb{E}_{s_0} \left[ V^{\pi_\theta}(s_0) \right]
		= \mathbb{E}_{\pi_{\theta'}} \left[ \sum_{t=0}^{\infty} \gamma^t V^{\pi_\theta}(s_t) - \sum_{t=1}^{\infty} \gamma^t V^{\pi_\theta}(s_t) \right] 
		= -\mathbb{E}_{\pi_{\theta'}} \left[ \sum_{t=0}^{\infty} \gamma^t \left( \gamma V^{\pi_\theta}(s_{t+1}) - V^{\pi_\theta}(s_t) \right) \right]
	\end{split}
\end{equation}

Based on the \autoref{eq:J1}--\autoref{eq:J2} and Paper \citet{kakade2002approximately}, we have:
\begin{equation}
	\begin{split}
		J(\theta') - J(\theta) &= \mathbb{E}_{s_0} \left[ V^{\pi_{\theta'}}(s_0) \right] - \mathbb{E}_{s_0} \left[ V^{\pi_\theta}(s_0) \right] \\
		&= \mathbb{E}_{\pi_{\theta'}} \left[ \sum_{t=0}^{\infty} \gamma^t r(s_t, a_t) \right] + \mathbb{E}_{\pi_{\theta'}} \left[ \sum_{t=0}^{\infty} \gamma^t \left( \gamma V^{\pi_\theta}(s_{t+1}) - V^{\pi_\theta}(s_t) \right) \right] \\
		&= \mathbb{E}_{\pi_{\theta'}} \left[ \sum_{t=0}^{\infty} \gamma^t \left( r(s_t, a_t) + \gamma V^{\pi_\theta}(s_{t+1}) - V^{\pi_\theta}(s_t) \right) \right]
	\end{split}
\end{equation}

Meanwhile, the advantage function is denoted as:
\begin{equation}
	\begin{split}
		\centering
		A^{\pi_\theta}(s_{t}, a_{t}) = Q^{\pi_\theta}(s_{t}, a_{t})-V^{\pi_\theta}(s_{t})
		= r(s_t, a_t) + \gamma V^{\pi_\theta}(s_{t+1}) - V^{\pi_\theta}(s_t)
	\end{split}
\end{equation}
\noindent
where $Q^\pi(s_t, a_t) = r(s_t, a_t) + \gamma \mathbb{E}_{s_{t+1}} \left[ V^\pi(s_{t+1}) \right]$ is the state-action value function, representing the expected return after taking action $a_t$ in state $s_t$, so:
\begin{equation}
	\begin{split}
		J(\theta') - J(\theta) = \mathbb{E}_{\pi_{\theta'}} \left[ \sum_{t=0}^{\infty} \gamma^t A^{\pi_\theta}(s_{t}, a_{t}) \right]
		= \sum_{t=0}^{\infty} \gamma^t \mathbb{E}_{s_t \sim P_t^{\pi_{\theta'}}} \mathbb{E}_{a_t \sim \pi_{\theta'}(\cdot | s_t)} \left[ A^{\pi_\theta}(s_t, a_t) \right]
	\end{split}
\end{equation}
\noindent
where $P^{\pi}_{t}(s)$ is the density function of state at time $t$.
Let $\nu^{\pi}(s)=(1-\gamma)\sum_{t=0}^{\infty}\gamma^{t}P^{\pi}_{t}(s)$ represent the discounted visitation frequencies of states, so:
\begin{equation}
	\begin{split}
		J(\theta') - J(\theta) = \dfrac{1}{1-\gamma}\sum_{s}\nu^{\pi_{\theta'}}(s)\sum_{a}\pi_{\theta'}(a|s)A^{\pi_{\theta}}(s, a)
		= \dfrac{1}{1-\gamma}\mathbb{E}_{s\sim\nu^{\pi_{\theta'}}}\mathbb{E}_{a \sim \pi_{\theta'}(\cdot | s_t)}[A^{\pi_{\theta}}(s, a)]
		\label{eq:new_old_gap}
	\end{split}
\end{equation}

Therefore, as long as we can find a new policy such that $\mathbb{E}_{s\sim\nu^{\pi_{\theta'}}}\mathbb{E}_{a \sim \pi_{\theta'}(\cdot | s_t)}[A^{\pi_{\theta}}(s, a)] \geq 0$, we can ensure that the policy performance improves monotonically, i.e., $J(\theta') \geq J(\theta)$.

\autoref{eq:new_old_gap} includes $\nu^{\pi_{\theta'}}(s)$, where $\pi_{\theta'}$ is the target policy we aim to solve for.
To facilitate the calculation, we substitute $\nu^{\pi_{\theta}}(s)$ for $\nu^{\pi_{\theta'}}(s)$ in $J(\theta)$.
Given the close similarity between the new and old policies, this approximation is considered acceptable.
Thus, we obtain the locally approximated objective function:
\begin{equation}
	\label{eq:new_obj}
	\begin{split}
		\centering
		L_{\theta}(\theta') = J(\theta) + \dfrac{1}{1-\gamma}\sum_{s}\nu^{\pi_{\theta}}(s)\sum_{a}\pi_{\theta'}(a|s)A^{\pi_{\theta}}(s, a)
		 = J(\theta) + \dfrac{1}{1-\gamma}\mathbb{E}_{s\sim\nu^{\pi_{\theta}}}\mathbb{E}_{a \sim \pi_{\theta'}(\cdot | s_t)}[A^{\pi_{\theta}}(s, a)]
	\end{split}
\end{equation}

Since $J(\theta) = \mathbb{E}_{\pi_{\theta}} \left[\sum_{t=0}^{\infty} \gamma^t r(s_{t}, a_{t})\right]$, the following objective function can be derived using importance sampling:
\begin{equation}
	\label{eq:final_obj}
	\begin{split}
		\centering
		L_{\theta}(\theta') &= \mathbb{E}_{s\sim\nu^{\pi_{\theta}}}\mathbb{E}_{a \sim \pi_{\theta'}(\cdot | s_t)}[A^{\pi_{\theta}}(s, a)] 
		=\mathbb{E}_{s\sim\nu^{\pi_{\theta}}}\mathbb{E}_{a \sim \pi_{\theta}(\cdot | s_t)} \left[\dfrac{\pi_{\theta'}(a|s)}{\pi_{\theta}(a|s)}A^{\pi_{\theta}}(s, a) \right]
	\end{split}
\end{equation}

Hence, by performing gradient descent on the objective function, the policy $\pi_{\theta}$ can be optimized.
However, the deep structure of neural networks can lead to abrupt policy shifts if an inappropriate step size is used when optimizing the objective function along the policy gradient, potentially causing significant performance degradation.
\citet{schulman2015trust} proposed the Trust Region Policy Optimization (TRPO) method, which optimizes $L_{\theta}(\theta')$ by imposing a constraint on the KL divergence between the new and old policies.
To simplify the computation, the average KL divergence can be used to relax the constraint on the maximum value of the KL divergence, so the optimization objective for TRPO is:
\begin{equation}
	\begin{aligned}
		&\max_{\theta'} L_\theta(\theta') \\
		&\text{s.t.} \mathbb{E}_{s \sim \nu^{\pi_{\theta}}} \left[ D_{KL} \left( \pi_{\theta}(\cdot | s), \pi_{\theta'}(\cdot | s) \right) \right] \leq \delta
	\end{aligned}
\end{equation}

Although TRPO effectively limits the size of policy updates, it is computationally complex and time-consuming.
To address this, the PPO algorithm \citep{schulman2017proximal}
constrains policy updates by maximizing a clipped objective function:
\begin{equation}
	\label{eq:ppo_obj}
	L^{CLIP}(\theta) = \mathbb{E}_{\pi_{\theta}} \left[ \min \left( r_t(\theta) A^{\pi_{\theta}}(s, a), \text{clip}\left(r_t(\theta), 1 - \epsilon, 1 + \epsilon\right) A^{\pi_{\theta}}(s, a) \right) \right]
\end{equation}
\noindent
where  $r_{t}(\theta)=\dfrac{\pi_{\theta'}(a_{t}|s_{t})}{\pi_{\theta}(a_{t}|s_{t})}$ is the probability ratio, which is clipped within the range $(1-\epsilon, 1+\epsilon)$ by the operator $clip$; $\epsilon$ is the clipping factor that controls the clipping range.

\subsection{Adaptive Clipping Proximal Policy Optimization Plus Algorithm}
\label{sec:ACPPO+}
According to \autoref{eq:ppo_obj}, the value of $L^{CLIP}(\theta)$ is primarily influenced by the $A^{\pi_{\theta}}$.
When $A^{\pi_{\theta}}$ is positive, it indicates that the action $a_{t}$ is better than the average level, making it a favorable choice.
In this case, maximizing $L^{CLIP}(\theta)$ along the gradient direction increases $r_{t}(\theta)$, thereby raising the probability of selecting $a_{t}$.
Conversely, when $A^{\pi_{\theta}}$ is negative, it implies that $a_{t}$ is below average and thus not an ideal choice.
In this situation, maximizing $L^{CLIP}(\theta)$ along the gradient direction decreases $r_{t}(\theta)$, reducing the probability of selecting  $a_{t}$.
PPO restricts $r_{t}(\theta)$ within the range $(1-\epsilon, 1+\epsilon)$ through clipping.

This clipping mechanism limits the magnitude of policy updates in a simple and effective manner, thereby avoiding performance degradation due to abrupt policy changes.
However, previous PPO-based algorithms typically set the clipping factor $\epsilon$ as a fixed value.
When samples come from different sources with significant importance differences, this fixed value cannot adapt flexibly to the characteristics of each sample, making it challenging to accurately control the clipping range.
To address this issue, we proposed the improved ACPPO algorithm in previous work \citep{zhou2024hybrid}.
This algorithm introduces an adaptive clipping mechanism that dynamically adjusts $\epsilon$ based on sample importance, using the probability distribution of different actions.
The modified objective function is as follows:
\begin{equation}
	\label{eq:acppo_obj}
	L^{ACLIP}(\theta) =\left\{
	\begin{array}{lr}
		\mathbb{E}_{\pi_{\theta}} \left[ \min \left( r_t(\theta) A^{\pi_{\theta}}(s, a), \text{clip}\left(r_t(\theta), 1 - (\epsilon + \epsilon'), 1 + (\epsilon - \epsilon') \right) A^{\pi_{\theta}}(s, a) \right) \right], & {if}\ a_t=a^{P}_{t} \\
		\mathbb{E}_{\pi_{\theta}} \left[ \min \left( r_t(\theta) A^{\pi_{\theta}}(s, a), \text{clip}\left(r_t(\theta), 1 - (\epsilon - \epsilon'), 1 + (\epsilon + \epsilon') \right) A^{\pi_{\theta}}(s, a) \right) \right], & {if}\ a_t=a^{E}_{t} \\
	\end{array}
	\right.
\end{equation}
\noindent
where action $a^{E}_{t}$ originates from the expert policy and is considered safe and reliable in most situations; action $a^{P}_{t}$ is generated by the PPO policy and may lead to suboptimal outcomes, especially in the early training stages when the policy network is not yet fully optimized; $\epsilon'$ is the adaptive clipping factor, defined as follows:
\begin{equation}
	\epsilon' = \psi\dfrac{\left(\pi_\theta(a^s_{t} | s_t) - \pi_\theta(a^t_{t} | s_t)\right) + 1}{2}, \quad 0 \leq \psi \leq \epsilon
\end{equation}
where $\pi_\theta(\cdot | s_t)$ is the probability distribution of different actions in state $s_t$; $\psi$ serves as an adaptive factor to regulate the clipping range.

Although ACPPO performs adaptive clipping based on sample importance, the samples collected at each step are derived solely from either $a^{P}_{t}$ or $a^{E}_{t}$.
When the expert policy fully intervenes, it is highly likely that all collected samples are from expert actions, which prevents the agent from learning to avoid incorrect decisions, leading to an imbalance between positive and negative samples.
In this study, we collect the state, actions $a^{t}_{t}$ and $a^{s}_{t}$ generated by both the teacher and student, and the corresponding return value at each step as MDP training samples.
This approach not only doubles the amount of training data compared to ACPPO, improving sample efficiency, but also collects data for different actions under the same state, effectively mitigating the positive-negative sample imbalance issue in ACPPO.
We refer to the improved algorithm as Adaptive Clipping Proximal Policy Optimization Plus (ACPPO+), with its objective function modified as follows:
\begin{equation}
	\label{eq:acppo+_obj}
	\begin{split}
		L^{ACLIP+}(\theta) &= \mathbb{E}_{\pi_\theta} \Bigg[ \mathbb{I}_{a_t^s} \cdot \min \left( r_t(\theta) A^{\pi_\theta}(s, a_t^s), \text{clip}\left(r_t(\theta), 1 - (\epsilon + \epsilon'), 1 + (\epsilon - \epsilon')\right) A^{\pi_\theta}(s, a_t^s) \right) \\ 
		& \quad + \mathbb{I}_{a_t^t} \cdot \min \left( r_t(\theta) A^{\pi_\theta}(s, a_t^t), \text{clip}\left(r_t(\theta), 1 - (\epsilon - \epsilon'), 1 + (\epsilon + \epsilon')\right) A^{\pi_\theta}(s, a_t^t) \right) \Bigg]
	\end{split}
\end{equation}
\noindent
where $\mathbb{I}_{a_t^s}$ and $\mathbb{I}_{a_t^t}$ are indicator variables representing whether actions $a_t^s$ and $a_t^t$ exist, respectively.
If the action exists, the value is 1; otherwise, it is 0.

As seen from \autoref{eq:acppo+_obj}, ACPPO+ allows for greater updates to policy parameters when training with more important teacher samples, while limiting the influence of student samples.
Through this adaptive clipping mechanism, ACPPO+ not only leverages data generated by both teacher and student policies but also effectively adapts to samples from various sources.
This accelerates the alignment of the student policy with the teacher policy, thereby improving the agent's learning efficiency.

\subsection{Simple to Complex Collaborative Decision-Making Framework}
In this section, we provide a detailed overview of the components of the S2CD framework, which include the offline teacher training module, the high-level layer, and the low-level layer.
In the teacher training module, an RL agent is rapidly trained in a lightweight simulation environment as the teacher model, with two neural networks used to approximate the Return function and the Q-Value function, respectively, to predict the return values and Q-values for state-action pairs.
The decision-making layer outputs final high-level commands based on the teacher’s action guidance and state information, while the low-level layer receives these commands to perform path planning and ultimately control the vehicle’s motion.
\autoref{fig:S2CD_framework} depicts the components of the S2CD framework.
\begin{figure}[t]
	\centering
	\resizebox*{14cm}{!}{\includegraphics{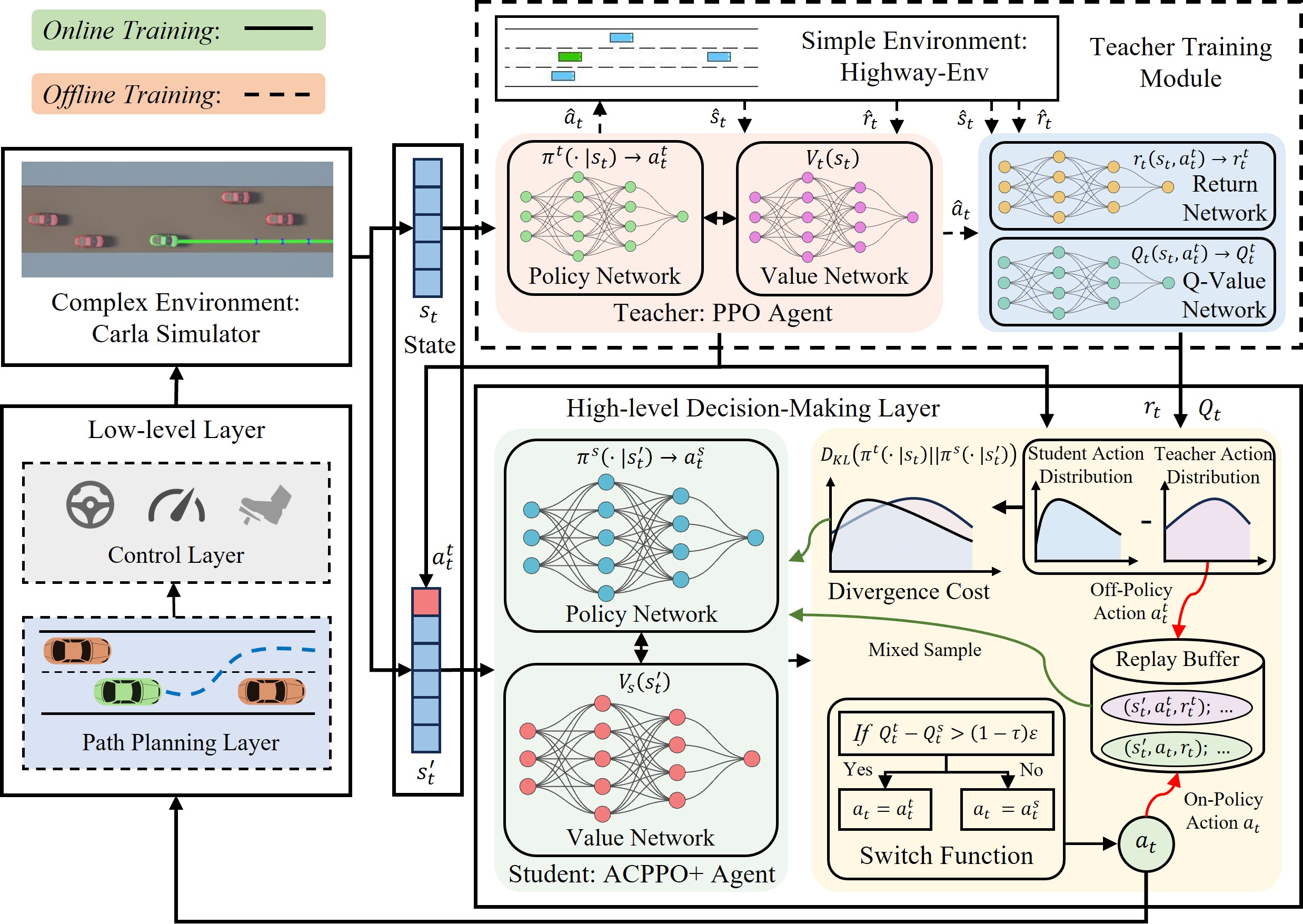}}
	\caption{
		The S2CD framework consists of a teacher training module, a high-level decision-making layer, and a lower-level layer, enhanced by 4 innovative modules to improve learning efficiency and performance:
		1. Employing the teacher model for action intervention and demonstration to enhance the safety of the student policy;
		2. Utilizing dual-source data for training to improve sample efficiency;
		3. Employing KL divergence constraints in policy updates to enable the student policy to quickly approach the teacher policy;
		4. Gradually reducing the teacher model's intervention via a weaning mechanism to prevent excessive reliance of the student on the teacher.
	}
	\label{fig:S2CD_framework}	
\end{figure}

\subsubsection{Teacher Training Module}
\label{sec:teacher_training_module}
The teacher training module functions as an offline pre-training module.
We pre-train the teacher model using the PPO algorithm in the lightweight autonomous driving simulation environment, Highway-env.
In Highway-env, due to the significant simplification of the decision-making and control processes for autonomous vehicles, an effective decision model can be easily trained using the PPO algorithm.

Furthermore, the data collected from the interaction between the agent and the environment, including states, actions, and returns, can be used not only to train the PPO algorithm but also to train the Return and Q-Value networks to approximate the state-action's return function and Q-value function.
When training the student agent in the Carla simulation environment, given a state $s_{t}$, the teacher model outputs a guiding action $a^{t}_{t}$ based on the probability distribution of $\pi^{t}$.
The Return and Q-Value networks then predict the short-term reward (Return) and the long-term reward (Q-value) for selecting action $a^{t}_{t}$ in state $s_{t}$.
Thus, the teacher training module can provide not only decision guidance for the current state but also precise quantitative feedback for the student agent.
The algorithm flow of the teacher training module is presented in \autoref{alg:teacher_model_training}.

\begin{algorithm}[t]
	\SetAlgoLined
	\caption{Teacher Model Training}
	\label{alg:teacher_model_training}
	\KwIn{Simple Environment State $\hat{s}_{t}$}
	\KwOut{Teacher Action $a^{t}_{t}$, Predicted Return Value $r^{t}_{t}$, and Q-value $Q^{t}_{t}$}
	\BlankLine
	Randomly initialize Actor network $\hat{NN}_{1}$ and Critic network $\hat{NN}_{2}$ with weights $\hat{\theta}_{1}$ and $\hat{\theta}_{2}$  \\
	Randomly initialize Return network $\bar{NN}_{1}$ and Q-Value network $\bar{NN}_{2}$ with weights $\bar{\varphi}_{1}$ and $\bar{\varphi}_{2}$  \\
	Copy Actor and Critic networks to Actor and Critic target networks respectively: $\hat{NN}_{1}' \leftarrow \hat{NN}_{1}, \hat{NN}_{2}' \leftarrow \hat{NN}_{2}, \hat{\theta}_{1}' \leftarrow \hat{\theta}_{1}$, $\hat{\theta}_{2}' \leftarrow \hat{\theta}_{2}$ \\	
	\For{$n=1:N_{total}$}{
		\For{$t=1:T_{epoch}$}{
			Get action from $\pi^{t}$: $\hat{a}_{t} \leftarrow \pi^{t}(\cdot|\hat{s}_{t})$ \\
			Adopt $\hat{a}_{t}$ and collect $(\hat{s}_{t}, \hat{a}_{t}, \hat{r}_{t}, \hat{s}_{t+1})$ into replay buffer $\hat{B}$\\
			Compute advantage estimates $\hat{A}^{\pi_{\theta}}$ \\
			Compute Q-Value of $\hat{a}_{t}$: $\hat{Q}_{t}$\\
			Collect $(\hat{s}_{t}, \hat{a}_{t}, \hat{r}_{t}, \hat{Q}_{t})$ into replay buffer $\bar{B}$\\
		}
		Update Actor network $\hat{NN}_{1}$ and Critic network $\hat{NN}_{2}$ using $\hat{B}$ and $\hat{A}^{\pi_{\theta}}$ by maximizing $L^{CLIP}(\theta)$: $\hat{\theta}_{1}^{new}, \hat{\theta}_{1}'^{new}, \hat{\theta}_{2}^{new}, \hat{\theta}_{2}'^{new} \leftarrow \hat{\theta}_{1}, \hat{\theta}_{1}', \hat{\theta}_{2}, \hat{\theta}_{2}'$ \\
		Update Return network $\bar{NN}_{1}$ and Q-Value network $\bar{NN}_{2}$ using $\bar{B}$ via gradient descent: $\bar{\varphi}_{1}^{new}, \bar{\varphi}_{2}^{new} \leftarrow \bar{\varphi}_{1}, \bar{\varphi}_{2}$
	}
\end{algorithm}

\subsubsection{High-level Decision-Making Layer}
In the high-level decision-making layer, several modifications have been made to the traditional PPO algorithm by incorporating innovative modules designed to enhance learning efficiency and performance.

\paragraph{\textbf{Action intervention and demonstration}}
First, after completing the offline training of the teacher model, the environment state $s_t$ is fed into the teacher's policy network $\pi^{t}$, which outputs the action $a^{t}_{t}$ with the highest probability ($a^{t}_{t}$ considered optimal by the teacher for $s_t$).
Simultaneously, based on the Return network and Q-Value network, the return and Q-value for action $a^{t}_{t}$ in the given state $s_t$ are predicted.
Next, the teacher’s guiding action $a^{t}_{t}$ is used as input to the student model, along with the state $s_t$, to form a new state representation $s_t'$.
The student agent’s policy network generates the probability distribution of actions based on $s_t'$ and selects the action $a^{s}_{t}$ with the highest probability.
At this stage, both the teacher and student models output actions $a^{t}_{t}$ and $a^{s}_{t}$, respectively, according to their own policies.
To enhance the teacher’s guidance of the student and allow for intervention in the student’s actions when necessary, we propose a novel switch function based on the Q-values of $a^{t}_{t}$ and $a^{s}_{t}$, as shown below:
\begin{equation}
	\label{eq:switch_function}
	a_t = \left\{
	\begin{array}{lr}
		a^{t}_{t}, & {if}\ Q^{t}_{t}-Q^{s}_{t}>(1-\tau)\varepsilon \\
		a^{s}_{t}, & otherwise \\
	\end{array}
	\right.
\end{equation}

According to this switch function, we can define the mixed behavior policy $\pi^{\text{mix}}$ as follows:

\begin{equation}
	\label{eq:mix_policy}
	\pi^{\text{mix}}(\cdot | s) = \mathcal{T}(s) \pi^{t}(\cdot | s) + \left( 1 - \mathcal{T}(s) \right) \pi^{s}(\cdot | s)
\end{equation}
\noindent
where $\mathcal{T}(s)=1$ indicates that the teacher has intervened in the action, and $\mathcal{T}(s)=0$ means otherwise.

The switch function indicates that the teacher will tolerate the student when the teacher cannot significantly outperform the student in terms of long-term returns.
Specifically, when the difference in Q-value between actions $a^{t}_{t}$ and $a^{s}_{t}$ exceeds the tolerance threshold $(1-\tau)\varepsilon$, it implies that the student’s performance is suboptimal, necessitating intervention by the teacher.
Otherwise, the teacher acknowledges the student’s performance and refrains from overriding the action.
Here, Q-values are predicted by the trained Q-Value network, $\varepsilon$ is the tolerance coefficient used to control the extent of the student’s reliance on the teacher, and $\tau$ is a decay coefficient, where $1-\tau$ increases the teacher’s tolerance, gradually reducing the student’s dependence on the teacher.
$\tau$ is shown in \autoref{eq:decay_ex}:
\begin{equation}
	\label{eq:decay_ex}
	\tau=\frac{1}{1+e^{\tfrac{n_{e}}{q_{1}}-q_{2}}}
\end{equation}
\noindent
where $n_{e}$ is the number of episodes experienced, and the values of $q_{1}$ and $q_{2}$ are shown in \autoref{tab:parameters_of_S2CD}.
Based on this switch function, the final action $a_{t}$ can be determined for execution.
\begin{table}
	\centering
	\caption{Parameters of S2CD framework}
	\label{tab:parameters_of_S2CD}
	\begin{tabular}{cc|cc}
		\toprule
		\textbf{Hyper-parameters} & \textbf{Value} & \textbf{Hyper-parameters} & \textbf{Value} \\ 
		\midrule
		Discount factor $\gamma$ & 0.96 & Learning rate & 0.0005 \\
		Learning rate decay & True & Lambda entropy $\beta$ & 0.01 \\ 
		Total training steps & 300K & Lambda advantage $\lambda$ & 0.98 \\ 
		Optimizer & AdamW & Hyperparameter $q_{1}$ & 3 \\
		Mini batch size & 64 & Hyperparameter $q_{2}$ & 10 \\
		 Clip parameter $\epsilon$ & 0.2 & Total steps per episode & 5,000\\ 
		Hyperparameter of AC $\psi$ & 0.2 & Initial Lagrange multiplier $\xi$ & 0.01 \\
		Efficiency weight $\alpha_1$ & 0.5 & Safety weight $\alpha_2$ & 1.0 \\
		Tolerance coefficient $\varepsilon$ & 0.5 & & \\
		\bottomrule
	\end{tabular}
\end{table}

To evaluate the effectiveness of the teacher’s policy intervention, we derive the upper and lower bounds of the return $J_{\theta_{\text{mix}}}$ for the mixed behavior policy, as shown in \autoref{the:effect_bound}:
\begin{theorem}
	\label{the:effect_bound}
	With the switch function, the return $J(\theta_\text{{mix}})$ is bounded both below and above by:
	\begin{equation}
		\label{eq:effect_bound}
		J(\theta_{t}) + \frac{\sqrt{2}(1 - \omega)R_{\max}}{(1 - \gamma)^2} \sqrt{H - \kappa}
		\geq J(\theta_\text{{mix}}) 
		\geq J(\theta_{t}) - \frac{\sqrt{2}(1 - \omega)R_{\max}}{(1 - \gamma)^2} \sqrt{H - \kappa}
	\end{equation}
	\noindent
	where $H = \mathbb{E}_{s \sim d_{\pi^{\text{mix}}}} \mathcal{H}(\pi^{t}(\cdot | s))$ represents the average entropy of the teacher policy, $\kappa$ is a small error term, $R_{\max}$ represents the maximum reward value, and $\omega$ is the intervention rate determined by the switch function.
\end{theorem}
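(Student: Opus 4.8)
The plan is to bound the one-sided gap $|J(\theta_{\text{mix}}) - J(\theta_t)|$ by the stated quantity; since that quantity is symmetric, the two-sided statement in \autoref{eq:effect_bound} then follows at once. The natural entry point is the exact performance difference identity already established in \autoref{eq:new_old_gap}, instantiated with the new policy $\pi_{\theta'} = \pi^{\text{mix}}$ and the old policy $\pi_\theta = \pi^t$:
\begin{equation}
J(\theta_{\text{mix}}) - J(\theta_t) = \frac{1}{1-\gamma}\,\mathbb{E}_{s\sim d_{\pi^{\text{mix}}}}\,\mathbb{E}_{a\sim\pi^{\text{mix}}(\cdot|s)}\left[A^{\pi^t}(s,a)\right].
\end{equation}
A useful feature here is that the visitation distribution is exactly $d_{\pi^{\text{mix}}}$, the same measure used to define $H$, so unlike the TRPO surrogate no change-of-distribution approximation is required at this stage.

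First I would eliminate the baseline using the pointwise identity $\mathbb{E}_{a\sim\pi^t(\cdot|s)}[A^{\pi^t}(s,a)]=0$, rewriting the inner expectation as $\sum_a(\pi^{\text{mix}}(a|s)-\pi^t(a|s))A^{\pi^t}(s,a)$. Taking absolute values and applying H\"older's inequality separates this into the $\ell_\infty$ norm of the advantage and the $\ell_1$ policy gap. Bounding $\|A^{\pi^t}(s,\cdot)\|_\infty \le R_{\max}/(1-\gamma)$, which holds because $Q^{\pi^t},V^{\pi^t}\in[0,R_{\max}/(1-\gamma)]$, contributes the second factor of $1/(1-\gamma)$ and reduces the whole problem to controlling $\mathbb{E}_{s\sim d_{\pi^{\text{mix}}}}\|\pi^{\text{mix}}(\cdot|s)-\pi^t(\cdot|s)\|_1$.

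Next I would exploit the structure of the switch function from \autoref{eq:mix_policy}: since $\pi^{\text{mix}}-\pi^t = (1-\mathcal{T}(s))(\pi^s-\pi^t)$, the policy gap is identically zero on every intervention state and equals $(1-\mathcal{T}(s))\|\pi^s-\pi^t\|_1$ elsewhere, so taking the state expectation factors out the non-intervention probability $(1-\omega)$. I would then apply Pinsker's inequality $\|p-q\|_1 \le \sqrt{2\,D_{KL}(p\|q)}$ followed by Jensen to pull the expectation inside the square root, which supplies the factor $\sqrt{2}$ and leaves the averaged divergence $\mathbb{E}_{s\sim d_{\pi^{\text{mix}}}}[D_{KL}(\pi^{\text{mix}}(\cdot|s)\|\pi^t(\cdot|s))]$. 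Chaining these constants reproduces exactly the prefactor $\sqrt{2}(1-\omega)R_{\max}/(1-\gamma)^2$.

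The final step is to identify this averaged KL divergence with $H-\kappa$, and this is where I expect the main obstacle to lie. Using the decomposition $D_{KL}(\pi^{\text{mix}}\|\pi^t)=H_{\text{cross}}(\pi^{\text{mix}},\pi^t)-\mathcal{H}(\pi^{\text{mix}})$ and noting that the cross-entropy term collapses exactly to $\mathcal{H}(\pi^t)$ on intervention states, I would identify the leading contribution with $H=\mathbb{E}_{s\sim d_{\pi^{\text{mix}}}}\mathcal{H}(\pi^t(\cdot|s))$ and absorb the residual entropy and cross-entropy deviations on the non-intervention states into the small error term $\kappa$. In contrast to the earlier steps, which are clean applications of the performance difference lemma, H\"older, and Pinsker, this passage is only approximate and hinges on the mixed policy being close to the teacher; making $\kappa$ precise and arguing that it stays small as the student is driven toward the teacher is the delicate part, and I would present it as a controlled remainder rather than attempt a closed-form evaluation.
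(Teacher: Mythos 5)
Your proposal is correct and follows essentially the same route as the paper's own proof: bound the gap by $\frac{R_{\max}}{(1-\gamma)^2}\,\mathbb{E}_{s\sim d_{\text{mix}}}\left\|\pi^{\text{mix}}(\cdot|s)-\pi^{t}(\cdot|s)\right\|_1$, use the switch-function structure to pull out the factor $(1-\omega)$, apply Pinsker and Jensen to obtain $\sqrt{2}\sqrt{H-\kappa}$, and absorb the residual cross-entropy terms into $\kappa$. The only substantive differences are cosmetic: you explicitly derive (via the performance-difference lemma and H\"older) the first inequality that the paper asserts without proof, and your KL direction $D_{KL}(\pi^{\text{mix}}\Vert\pi^{t})$ versus the paper's $D_{KL}(\pi^{t}\Vert\pi^{s})$ is immaterial, since both arguments handle the final identification with $H-\kappa$ in the same way, as a definitional remainder rather than a derived estimate.
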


Appendix \ref{Appendix:Theorem_3} presents a detailed derivation of \autoref{the:effect_bound}, which demonstrates that, via the switch function, the student can effectively improve the upper bound of performance through teacher interventions while maintaining a guaranteed lower bound.
Furthermore, the student's performance is strongly influenced by the quality of the teacher.

Simultaneously, to illustrate the improvement in safety performance of S2CD resulting from teacher policy intervention, we introduce \autoref{the:safety_bound} to demonstrate that the expected cumulative reward obtained by policy $\pi^{\text{mix}}$ is assured to be no less than the expected cumulative return gained by policy $\pi^{t}$, as expressed by:

\begin{theorem}
	\label{the:safety_bound}
	The expected cumulative reward acquired from $\pi^{\text{mix}}$ is ensured to be no less than the expected cumulative reward derived from $\pi^{t}$:
	\begin{equation}
		\label{eq:safety_bound}
		\mathbb{E}_{\pi^{\text{mix}}} \left[ \sum_{t=0}^{H} \gamma^t r(s_t, a_t) \right] 
		\geq \mathbb{E}_{\pi^{t}} \left[ \sum_{t=0}^{H} \gamma^t r(s_t, a_t) \right]
	\end{equation}
\end{theorem}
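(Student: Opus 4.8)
The plan is to prove the stronger pointwise statement that the mixed policy's value function dominates the teacher's everywhere, $V^{\pi^{\text{mix}}}(s) \ge V^{\pi^t}(s)$ for every state $s$; taking the expectation over the initial-state distribution then yields \autoref{eq:safety_bound} at once, since $J(\theta_{\text{mix}}) = \mathbb{E}_{s_0}[V^{\pi^{\text{mix}}}(s_0)]$ and likewise for $\pi^t$. The natural tool is the performance-difference identity already established as \autoref{eq:new_old_gap}: applying it with new policy $\pi^{\text{mix}}$ and old policy $\pi^t$ gives
\[
J(\theta_{\text{mix}}) - J(\theta_t) = \frac{1}{1-\gamma}\,\mathbb{E}_{s\sim\nu^{\pi^{\text{mix}}}}\,\mathbb{E}_{a\sim\pi^{\text{mix}}(\cdot|s)}\bigl[A^{\pi^t}(s,a)\bigr].
\]
Since $\nu^{\pi^{\text{mix}}}(s)\ge 0$, it suffices to show the inner advantage expectation is non-negative in each state.

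The second step is to exploit the mixture structure of \autoref{eq:mix_policy} to split this expectation according to whether the teacher intervenes. In states where $\mathcal{T}(s)=1$ the mixed policy coincides with $\pi^t$, and the standard identity $\mathbb{E}_{a\sim\pi^t}[A^{\pi^t}(s,a)]=0$ (the advantage averages to zero under its own policy) makes the contribution vanish. In states where $\mathcal{T}(s)=0$ the student action $a^s_t$ is retained, and here I would invoke the switch rule of \autoref{eq:switch_function}: non-intervention means $Q^t_t - Q^s_t \le (1-\tau)\varepsilon$. Treating the learned Q-network as the teacher's true action value $Q^{\pi^t}$ and the teacher's mode action $a^t_t$ as greedy, so that $Q^t_t = \max_a Q^{\pi^t}(s,a) \ge V^{\pi^t}(s)$, the kept student action satisfies $Q^{\pi^t}(s,a^s_t) = Q^s_t \ge Q^t_t - (1-\tau)\varepsilon \ge V^{\pi^t}(s) - (1-\tau)\varepsilon$, i.e.\ $A^{\pi^t}(s,a^s_t) \ge -(1-\tau)\varepsilon$.

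An equivalent and perhaps cleaner route, which matches the finite-horizon sum $\sum_{t=0}^{H}$ in the statement directly, is a backward induction on the horizon establishing $V^{\pi^{\text{mix}}}_h(s) \ge V^{\pi^t}_h(s)$: the base case follows from the one-step bound above, and the inductive step combines the monotonicity of the Bellman operator with the same case split on $\mathcal{T}(s)$. Either way, assembling the per-state bounds over the visitation distribution delivers the result.

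The main obstacle is the tolerance term $(1-\tau)\varepsilon$ built into the switch. With a strictly positive threshold the student may be retained even when it is marginally worse than the teacher, so the per-state advantage is only bounded below by $-(1-\tau)\varepsilon$ rather than by $0$; propagating this slack yields $J(\theta_{\text{mix}}) \ge J(\theta_t) - (1-\tau)\varepsilon/(1-\gamma)$, and the exact inequality of \autoref{eq:safety_bound} is recovered precisely in the regime where the threshold collapses --- for instance early in training where $\tau\to 1$ forces $(1-\tau)\varepsilon \to 0$ and retention requires $Q^s_t \ge Q^t_t$, or in the idealized limit $\varepsilon\to 0$. The remaining care points are justifying the identification of the learned Q-network with the exact $Q^{\pi^t}$ and of the teacher's mode action with its Q-greedy action, since the clean domination $Q^t_t \ge V^{\pi^t}(s)$ rests on both.
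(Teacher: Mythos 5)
Your primary route --- the performance-difference lemma (\autoref{eq:new_old_gap}) applied with $\pi^{\text{mix}}$ as the new policy and $\pi^{t}$ as the old one, followed by a per-state case split on the intervention indicator $\mathcal{T}(s)$ --- is genuinely different from the paper's proof. The paper argues by induction on the horizon: it asserts that, by the switch function, $\pi^{\text{mix}}$ selects whichever of the teacher/student actions yields the higher return, so that $V_{\pi^{\text{mix}}}(s') = \max\left(V_{\pi^{t}}(s'), V_{\pi^{s}}(s')\right)$ at horizon $k$, and the Bellman recursion then gives $V_{\pi^{\text{mix}}}(s) \geq \max\left(V_{\pi^{t}}(s), V_{\pi^{s}}(s)\right) \geq V_{\pi^{t}}(s)$ at horizon $k+1$; in other words, the paper's proof is exactly your ``alternative and perhaps cleaner route'' of backward induction, not your main one. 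The comparison cuts in your favor on rigor. The paper's induction silently treats the switch as an exact max operator: zero tolerance, an exact Q-network, and identification of the compared Q-values with the true action values of the teacher policy --- precisely the three idealizations you flag as care points. Under the actual switch rule with its positive threshold $(1-\tau)\varepsilon$, pointwise domination fails, and your advantage-based analysis correctly quantifies what survives: $J(\theta_{\text{mix}}) \geq J(\theta_{t}) - (1-\tau)\varepsilon/(1-\gamma)$, with the stated inequality recovered only when the tolerance collapses. (The paper's induction also has a structural flaw you implicitly repair: its inductive hypothesis is an equality $V_{\pi^{\text{mix}}} = \max(\cdot,\cdot)$ while its inductive step only delivers an inequality, so the induction as written does not close; the ``$\geq$'' form combined with monotonicity of the Bellman operator, as in your sketch, is what actually works.) In short, your approach is correct and buys an explicit degradation bound plus an honest statement of the hidden assumptions; the paper's approach buys a shorter argument and the exact finite-horizon statement, but only under idealizations it never states.
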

Appendix \ref{Appendix:Theorem_4} presents a detailed derivation of \autoref{the:safety_bound}, where \autoref{eq:safety_bound} shows that the S2CD framework can ensure enhanced driving safety performance through the teacher's guidance.

\paragraph{\textbf{Model training with dual-source data}}
When action $a_{t}$ is executed, the environment provides feedback, such as the return, to the student agent for updating the neural network.
At this point, the student agent has two complete sets of data available for training and optimizing the policy $\pi^{s}$: $(s_{t}', a_{t}, r_{t})$ consists of on-policy data collected by executing action $a_{t}$ in the current state $s_{t}'$ and receiving environmental feedback $r_{t})$, while the other set $(s_{t}', a^{t}_{t}, r^{t}_{t})$ is off-policy data provided by the teacher module, where action $a^{t}_{t}$ and the return $r^{t}_{t}$ are predicted based on state $s_{t}$.
Due to the additional data provided by the teacher model, the S2CD framework achieves significantly higher sample efficiency.
For the two distinct data sources from the teacher and student, the teacher's guidance is particularly crucial, particularly in the initial phases of training.
Therefore, we applied the ACPPO+ algorithm proposed in \autoref{sec:ACPPO+} to optimize and update the neural network, to further enhance the learning efficiency of the algorithm.
ACPPO+ improves learning efficiency by loosening the update magnitude for significant samples while tightening it for less important ones.

\paragraph{\textbf{KL divergence constraint}}
In addition, to enable the student policy to quickly approximate the distribution of the teacher policy, we adopted a behavior cloning mechanism.
The KL divergence between the teacher's and student's policies is used as a constraint in the algorithm, and the objective function is reformulated as an unconstrained optimization problem using the Lagrangian method. 
The constraint function is as follows:
\begin{equation}
	\label{eq:dkl_cost}
	C^{KL}_t = D_{KL}\left (\pi^{t}(\cdot|s_{t})|| (\pi^{s}(\cdot|s_{t}'))
	\right.
\end{equation}

So the objective function can be defined as follows:
\begin{equation}
	\begin{split}
		L^{ACLIP+'}(\theta) &= \mathbb{E}_{\pi_\theta} \Bigg[ \mathbb{I}_{a_t^s} \cdot \min \left( r_t(\theta) A^{\pi_\theta}(s, a_t^s), \text{clip}\left(r_t(\theta), 1 - (\epsilon + \epsilon'), 1 + (\epsilon - \epsilon')\right) A^{\pi_\theta}(s, a_t^s) \right) \\ 
		& \quad + \mathbb{I}_{a_t^t} \cdot \min \left( r_t(\theta) A^{\pi_\theta}(s, a_t^t), \text{clip}\left(r_t(\theta), 1 - (\epsilon - \epsilon'), 1 + (\epsilon + \epsilon')\right) A^{\pi_\theta}(s, a_t^t) \right) - \xi C^{KL}_t \Bigg]
	\end{split}
\end{equation}
\noindent
where $\xi$ represents the Lagrange multiplier for $C^{KL}_t$, and its initial value is provided in the \autoref{tab:parameters_of_S2CD}.

\paragraph{\textbf{Intervention decay}}
The teacher’s intervention can significantly accelerate the student’s learning process and prevent dangerous actions, thereby greatly enhancing the student’s model performance, especially during the early stages of training.
However, since the teacher’s policy has an upper performance limit, over-reliance on the teacher’s guidance can constrain the student’s ultimate performance.
To address this issue, we introduced a weaning mechanism that gradually reduces the teacher’s intervention in the student during the mid-to-late stages of training, thereby mitigating the influence of the teacher’s policy.
This allows the S2CD framework to effectively balance teacher guidance with student exploration, ultimately achieving superior performance compared to a single policy.
The weaning mechanism is composed of two main components:
\begin{itemize}
	\item
	We use the decay coefficient $\tau$, which gradually increases the tolerance of the teacher’s policy toward the student.
	Specifically, the tolerance increases by $1-\tau$, leading to a progressive reduction in the intervention probability $\omega$ of the teacher’s actions, ultimately decreasing it to 0.
	\item
	Secondly, as the teacher's intervention probability progressively decreases, we utilize $\tau$ to gradually reduce both the adaptive factor $\epsilon'$ and the constraint function $C^{KL}_{t}$, thereby diminishing the teacher's influence on the student.
\end{itemize}

Thus, the final objective function of the S2CD framework is defined as follows:
\begin{equation}
	\label{eq:S2CD_obj}
	\begin{split}
		L^{S2CD}(\theta) &= \mathbb{E}_{\pi_\theta} \Bigg[ \mathbb{I}_{a_t^s} \cdot \min \left( r_t(\theta) A^{\pi_\theta}(s, a_t^s), \text{clip}\left(r_t(\theta), 1 - (\epsilon + \tau\epsilon'), 1 + (\epsilon - \tau\epsilon')\right) A^{\pi_\theta}(s, a_t^s) \right) \\ 
		& \quad + \mathbb{I}_{a_t^t} \cdot \min \left( r_t(\theta) A^{\pi_\theta}(s, a_t^t), \text{clip}\left(r_t(\theta), 1 - (\epsilon - \tau\epsilon'),1 + (\epsilon + \tau\epsilon')\right) A^{\pi_\theta}(s, a_t^t) \right) - \tau \xi C^{KL}_t \Bigg]
	\end{split}
\end{equation}

According to $\tau$, we allow the teacher to provide as many correct decisions as possible during the early stages of training to intervene and guide the student's learning.
As training progresses, teacher intervention is gradually reduced in the later stages, allowing the student to independently explore better strategies.
Eventually, as $\tau$ decreases and approaches zero, $L^{S2CD}(\theta)$ gradually returns to the form of \autoref{eq:ppo_obj}.
The operation of the S2CD decision-making framework is shown in \autoref{alg:S2CD_algorithm}, and all parameters are listed in \autoref{tab:parameters_of_S2CD}.

\begin{algorithm}[t]
	\SetAlgoLined
	\caption{S2CD Decision-Making Framework}
	\label{alg:S2CD_algorithm}
	\KwIn{Complex Environment State $s_{t}$, Teacher Action $a^{t}_{t}$, Predicted Return Value $r^{t}_{t}$, Q-value $Q^{t}_{t}$}
	\KwOut{Ego Vehicle Action $a_{t}$} 
	\BlankLine
	Randomly initialize Actor network $NN_{1}$ and Critic network $NN_{2}$ with weights $\theta_{1}$ and $\theta_{2}$  \\
	Copy Actor and Critic networks to Actor and Critic target networks respectively: $NN_{1}' \leftarrow NN_{1}, NN_{2}' \leftarrow NN_{2}, \theta_{1}' \leftarrow \theta_{1}, \theta_{2}' \leftarrow \theta_{2}$ \\
	\For{$n=1:N_{total}$}{
		\For{$t=1:T_{epoch}$}{
			Get action from teacher policy $\pi^{t}$: $a^{t}_{t} \leftarrow \pi^{t}(\cdot|s_{t})$ \\
			Add $a^{t}_{t}$ to $s_{t}$: $s_{t}' \leftarrow s_{t}$ \\
			Get action from student policy $\pi^{s}$: $a^{s}_{t} \leftarrow \pi^{s}(\cdot|s_{t}')$ \\
			Compute advantage estimates $A^{\pi_{\theta}}$ \\
			Predict Q-values of $a^{s}_{t}$ and $a^{t}_{t}$: $Q^{s}_{t}, Q^{t}_{t} \leftarrow$ Q-Value network \\
			\eIf{$Q^{t}_{t} - Q^{s}_{t} > (1-\tau)\varepsilon$}{
				Choose action under the guidance of teacher model: $a_{t} \leftarrow a^{t}_{t}$ \\
			}{
				Choose action based on Actor network: $a_{t} \leftarrow a^{s}_{t}$ \\
			}
			Adopt $a_{t}$ and collect $(s_{t}', a_{t}, r_{t}, s_{t+1}')$ into replay buffer $B$ \\
			Predict return value of $a^{t}_{t}$: $r^{t}_{t} \leftarrow$ Return network \\
			Collect $(s_{t}', a^{t}_{t}, r^{t}_{t}, s_{t+1}')$ into replay buffer $B$ \\
			Compute Kullback-Leibler divergence of teacher and student policies: $D_{KL} (\pi^{t}(\cdot|s_{t})|| \pi^{s}(\cdot|s_{t}'))$ \\
			Compute adaptive clipping factor $\epsilon'$ \\
			Update decay coefficient $\tau$
		}
		Update Actor network $NN_{1}$ and Critic network $NN_{2}$ using $B$, $A^{\pi_{\theta}}$ and $D_{KL}$ by maximizing $L^{S2CD}(\theta)$: $\theta_{1}^{new}, \theta_{1}'^{new}, \theta_{2}^{new}, \theta_{2}'^{new} \leftarrow \theta_{1}, \theta_{1}', \theta_{2}, \theta_{2}'$
	}
\end{algorithm}

\subsubsection{Low-level Layer}
\label{sec:low_level_layer}
In the S2CD framework, we developed a hierarchical architecture to enable autonomous vehicles to drive smoothly.
This framework consists of two components: high-level decision-making and low-level execution.
First, the high-level layer outputs lane-change or car-following commands according to the current state, providing a macro driving strategy for the vehicle.
Next, the low-level execution layer receives these high-level commands and is responsible for path planning, trajectory and speed control to precisely carry out the selected driving strategy.

\paragraph{\textbf{Path planning layer}}
We represent the centerline of the path with a series of points $(x_{0},y_{0}), (x_{1},y_{1}),\dots ,(x_{n},y_{n})$, where each segment $S_{i}$ represents a portion along the centerline, bounded by points $(x_{i},y_{i})$ and $(x_{i+1},y_{i+1})$.
The vehicle’s driving path is then planned using a cubic spline function, defined as follows:
\begin{equation}
	\label{eq:cubic_ex}
	y_{i}(x)=a_{i}+b_{i}(x-x_{i})+c_{i}(x-x_{i})^{2}+d_{i}(x-x_{i})^{3}
\end{equation}
\noindent
where $a_{i}$, $b_{i}$, $c_{i}$, and $d_{i}$ are parameters determined by the approach outlined in \citet{jiang2020dynamic}.

The ego vehicle’s global trajectory is determined by the built-in path navigation of the Carla simulator.
When the decision-making layer issues a lane-change command, the global navigation point is positioned in the target lane at a distance of 10 meters from the vehicle's current position.
\autoref{eq:cubic_ex} is then used to determine the trajectory of the lane-change segment for local path planning.

\paragraph{\textbf{Control layer}}
Since this study focuses on lane-change and car-following decision-making algorithms, the acceleration and speed of the ego vehicle are controlled using the Intelligent Driver Model (IDM) \citep{treiber2000congested} to ensure simplicity and prevent overly conservative behavior by the agent.
Therefore, the target acceleration $\dot{v}_{e}$ is calculated using \autoref{eq:IDM}.
\begin{equation}
	\label{eq:IDM}
	\dot{v}_{e}=a \left[1-\left(\dfrac{v_{e}}{v_{0}} \right)^{\zeta}- \left(\dfrac{max \left(s_{0}+v_{e}T+\dfrac{v_{e}(v_{e}-v_{1})}{2\sqrt{ab}}, 0 \right)}{s} \right)^{2} \right]
\end{equation}
\noindent
where $v_{e}$ and $v_{1}$ represent the speeds of the ego and the lead vehicles, respectively, while other parameters in the \ref{eq:IDM} are detailed in \autoref{tab:IDM_parameters}.
Using $v_{e}$ and $\dot{v}_{e}$, the target speed for the next moment can be calculated.

After establishing the target speed and driving path, the PID algorithm is applied to determine the specific control values.
To enhance control performance, two sets of parameters are utilized, as detailed in \autoref{tab:PID_parameters}.
\begin{table}
	\centering
	\begin{minipage}[b]{0.4\linewidth}
		\centering
		\caption{Parameters of IDM model}
		{\begin{tabular}{cc} 
				\toprule
				\textbf{Parameters} & \textbf{Value} \\
				\midrule
				Acceleration exponent $\zeta$ & 4 \\
				Target speed $v_{0}$ & 25 m/s \\
				Target time gap $T$ & 0.6 s \\
				Safety distance $s_{0}$ & 2.0 m \\
				Maximal acceleration $a$ & $2 m/s^{2}$ \\
				Comfortable deceleration $b$ & $2 m/s^{2}$ \\
				\bottomrule
		\end{tabular}}
		\label{tab:IDM_parameters}
	\end{minipage}
	\hspace{0.1cm}
	\hfill
	\begin{minipage}[b]{0.55\linewidth}
		\centering
		\caption{Parameters of PID controller}
		{\begin{tabular}{cc} 
				\toprule
				\textbf{Parameters} & \textbf{Value} \\ 
				\midrule
				Proportional Gain of lateral control $K_{P}$ & 0.75 \\
				Derivative Gain of lateral control $K_{D}$ & 0.01 \\
				Integral Gain of lateral control $K_{I}$ & 0.2 \\
				Proportional Gain of longitudinal control $K_{P}'$ & 0.37 \\
				Derivative Gain of longitudinal control $K_{D}'$ & 0.012 \\
				Integral Gain of longitudinal control $K_{I}'$ & 0.016 \\
				\bottomrule
		\end{tabular}}
		\label{tab:PID_parameters}
	\end{minipage}
\end{table}

\section{Implementation}
\label{sec:implementation}
This section begins by comparing the differences between the Highway-Env and Carla simulation environments, followed by a detailed description of a medium-density highway lane-change scenario.
Next, the scenario is modeled as a Markov Decision Process (MDP), and finally, all settings used in the simulation experiments are introduced.

\subsection{Highway-Env and Carla Simulation Environments}
Highway-Env and Carla are widely used simulation environments in autonomous driving research.
However, they differ significantly in terms of the complexity of decision-making processes, the level of control detail, and computational resource requirements.
Highway-Env represents the environment as a simplified 2-dimensional plane and reduces vehicle actions to discrete choices, such as accelerating, decelerating, and lane changing.
Consequently, the decision-making process in this environment is relatively straightforward, typically requiring only tens to hundreds of simulation steps to complete a task.
Because of the simplified environment, the simulation avoids complex perception data processing.
Additionally, the vehicle dynamics model is highly abstracted, ignoring real-world physical characteristics.
As a result, Highway-Env is extremely resource-efficient, achieving simulation speeds of hundreds or even thousands of iterations per second on standard computing devices.
This makes it ideal for large-scale RL training and strategy validation.

In contrast, Carla provides a high-fidelity 3-dimensional simulation environment that can model complex traffic scenarios and support the processing of various sensor data (e.g., cameras, LiDAR), along with continuous vehicle control.
The complexity of the environment and vehicle dynamics necessitates that the decision-making process consider multi-dimensional continuous control variables, such as steering angle, throttle, and brake force.
A typical driving task in Carla may require thousands to tens of thousands of simulation steps to complete.
Additionally, Carla requires higher simulation precision, involving high-dimensional sensor data processing and detailed physical simulations, which typically necessitates high-performance computing devices support.
Consequently, Carla’s simulation speed is relatively slow, achieving only a few to tens of simulation steps per second.
This makes Carla more suitable for testing and validating autonomous driving systems in complex environments, especially when real-world perception, planning, and control challenges are the focus.

In summary, Highway-Env is well-suited for the rapid validation of simple decision-making processes, particularly in RL strategy research, due to its simplified environment and lower computational requirements.
In contrast, Carla provides a more realistic testing platform for developing and validating complex autonomous driving systems.
The differences between the two environments are not only reflected in the simplification of environment representation and action space but also in the significant disparity in the number of decision steps required and the computational resources consumed by the simulations.

\subsection{Scenarios Specification}
\label{sec:scenarios_specification}
Medium traffic density on highways is defined as having roughly 15 vehicles per lane per kilometer, with vehicle spacing between 50 and 90 meters \citep{gold2016taking}.
This density level provides a balance by avoiding both congested conditions, which make lane changes difficult, and low-density conditions, where lane changes are largely unnecessary.
Therefore, it represents a typical scenario for most lane-change situations.
We use this scenario as the platform for training and evaluating the performance of the S2CD framework.
\autoref{fig:Traffic_scenario} shows the traffic scenario on the three-lane highway in the Highway-Env and Carla simulation environments, where the ego vehicle performs a lane change while surrounded by other vehicles.
According to driving principles and relevant traffic regulations, the ego vehicle must consider the positions and movements of the front, front-left, rear-left, front-right, and rear-right vehicles when executing a lane change.
Therefore, the states of the ego vehicle and surrounding vehicles, including their speeds and distances, will serve as inputs to the model.

In the experiments, all vehicles start with an initial velocity of 0 m/s, with a maximum velocity of 25 m/s.
To more accurately simulate real traffic conditions, the target speeds of vehicles are set between 15 and 25 m/s.
If the ego vehicle makes improper lane-change decisions while driving, it may come too close to other vehicles or even collide with them or the road boundary.
If the ego vehicle successfully travels 1 kilometer without any collisions, the episode is considered successful.
In the event of a collision, the episode is marked as a failure, and the environment is immediately reset for the next episode.

\begin{figure}[t]
	\centering
	\subcaptionbox{Highway-Env simulator\label{fig:Highway_Env_simulator}}{
		\resizebox*{7.5cm}{!}{\includegraphics{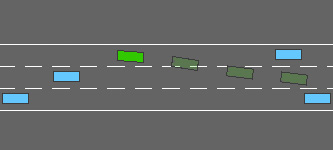}}}
	\hspace{0.5cm}
	\subcaptionbox{Carla simulator\label{fig:Carla_simulator}}{
		\resizebox*{7.5cm}{!}{\includegraphics{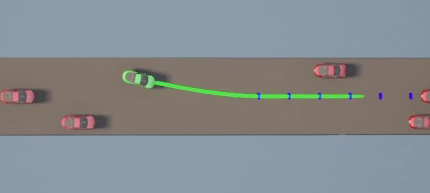}}}
	\caption{Medium-density traffic scenario with 3-lanes}
	\label{fig:Traffic_scenario}
\end{figure}

\subsection{Scenario Modeling}
Next, we define the state space, action space, and return function, thereby modeling the driving scenario as a MDP.

\subsubsection{State}
The state $s_{t}$ includes information of vehicles.
As described in \autoref{sec:scenarios_specification}, $s_{t}$ is as follows:
\begin{equation}
	\label{eq:st_ex}
	s_{t} = (v_{e,t},\ \{v_{i,t},\ d_{i,t}\})
\end{equation}
\noindent
where $v_{e}$ is the velocity of the ego vehicle; $\{v_{i},\ d_{i}\}$ represents the velocities and distances of surrounding vehicles, where $i=(1,2,3,4,5)$ corresponds to the front, front-left, rear-left, front-right, and rear-right vehicles, respectively.

\subsubsection{Action}
In this paper, the decision-making layer provides discrete commands for the low-level execution layer.
Consequently, the action $a_{t}$ is defined as follows:
\begin{equation}
	\label{eq:action_ex}
	a_{t} = (a_{1},\ a_{2},\ a_{3})
\end{equation}
\noindent
where $a_1$,$a_2$, and $a_3$ represent Follow, Left Lane Change, and Right Lane Change, respectively.

\subsubsection{Return}
In the highway lane-change scenario, the main goal of autonomous vehicles is to drive safely and at high speed.
Therefore, the return function is designed with the following two components:
\begin{enumerate}	
	\item
	\textbf{Efficiency Reward}:
	The maximum velocity for all vehicles is capped at 25 m/s.
	To prompt the ego vehicle to achieve greater speeds, we adopted the following efficiency reward function:
	\begin{equation}
		\label{eq:Re_ex}
		R_{e}=\left\{
		\begin{array}{lr}
			0,&{if}\ 0 <v_{e}< 12.5\ \text{m/s} \\ 
			\alpha_{1} \left(\dfrac{v_{e}}{12.5}-1 \right),&{if}\ 12.5\ \text{m/s}<v_{e}<25\ \text{m/s} \\
			\alpha_{1},&{otherwise}
		\end{array}
		\right.
	\end{equation}	
	\noindent
	where $\alpha_1 > 0$ is a hyperparameter that adjusts the significance of efficiency within the policy.
	\item
	\textbf{Safety Cost}:
	To maintain a safe distance between the ego vehicle with surrounding vehicles and to prevent collisions, we define the safety cost function as follows:
	\begin{equation}
		\label{eq:Rs_ex}
		C_{s}=\left\{
		\begin{array}{lr}
			\alpha_{2},&{if}\ 0<d_{safe}<5\ \text{m} \\
			\alpha_{2} \left(1-\dfrac{d_{safe}-5}{5} \right),&{if}\ 5\ \text{m}<d_{safe}<10\ \text{m} \\
			1,&{if}\ collision \\
			0,&{otherwise}
		\end{array}
		\right.
	\end{equation}
	\noindent
	where $\alpha_2 > 0$ is a hyperparameter that adjusts the significance of safety within the policy; $d_{safe}$ represents the safe distance, defined as the distance between the ego vehicle and the vehicle directly in front or behind.
\end{enumerate}

The values of $\alpha_1$ and $\alpha_2$ are provided in \autoref{tab:parameters_of_S2CD}, and the final Return function $R$ is presented in \autoref{eq:return}:
\begin{equation}
	\label{eq:return}
	R=R_{e}-C_{s}
\end{equation}

\subsection{Simulation Settings}
All simulation experiments were conducted on a computer equipped with an Intel i5-13600KF CPU, an NVIDIA GeForce RTX 4090 GPU, and 32GB of RAM.
The experiments involved two simulation environments, Highway-Env and Carla, with their respective runtime parameters listed in \autoref{tab:Highway-Env_parameters} and \autoref{tab:Carla_parameters}.
Given the focus of this paper on decision-making approaches, we streamlined the perception module by assuming the ego vehicle has direct access to accurate information (e.g., speed, distance) about surrounding vehicles within a 50-meter range.
To demonstrate the algorithm's generalization ability, different random seeds were employed each time the environment was reset.
In the Carla environment, the car-following and lane-changing behaviors of other vehicles are managed by the Traffic Manager.
In the Highway-Env simulation environment, other vehicles were managed using simple decision-making algorithms (e.g., IDM, MOBIL) and control algorithms (e.g., proportional controllers).

Since Highway-Env is a simplified simulation environment, while Carla is a high-fidelity simulation environment, simulating the same 1 km drive requires nearly 1,500 steps in Carla, compared to fewer than 150 steps in Highway-Env.
Specifically, when the ego vehicle makes a lane-change decision, the policy frequency in Highway-Env is set to 2, allowing the lane-change process to be completed in only 2 steps.
This simplification facilitates the training of effective teacher models using RL algorithms.
In contrast, the lane-change process in Carla requires 10 to 20 steps.
Consequently, obtaining an effective lane-change decision model using standard RL algorithms in Carla is challenging.

\begin{table}
	\centering
	\begin{minipage}[b]{0.45\linewidth}
		\centering
		\caption{Parameters of Highway-Env simulation}
		\label{tab:Highway-Env_parameters}
		{\begin{tabular}{cc} 
				\toprule
				\textbf{Parameters} & \textbf{Value} \\
				\midrule
				Lanes count & 3 \\
				Lane width & 3.75m \\
				Vehicle model & Kinematic \\
				Simulation frequency & 10Hz \\
				Policy frequency & 2Hz \\
				Auto lane change & True \\
				Speed limit & 25m/s \\
				\bottomrule
		\end{tabular}}
	\end{minipage}
	\hspace{0.1cm}
	\hfill
	\begin{minipage}[b]{0.45\linewidth}
		\centering
		\caption{Parameters of Carla simulation}
		\label{tab:Carla_parameters}
		{\begin{tabular}{cc} 
				\toprule
				\textbf{Parameters} & \textbf{Value} \\ 
				\midrule
				Lanes count & 3 \\
				Lane width & 3.75m \\
				Vehicle model & Dynamic \\
				Synchronous mode & True \\
				Simulation time step & 0.05s \\
				Auto lane change for other vehicle & True \\
				Speed limit & 25m/s \\
				\bottomrule
		\end{tabular}}
	\end{minipage}
\end{table}

\section{Simulation Result}
\label{sec:simulation_result}
We conducted performance testing and comparisons of the proposed S2CD against other state-of-the-art (SOTA) methods in a highway lane-change scenario:

DQN (Deep Q-Network \citep{mnih2013playing}, Value-based RL): DQN integrates deep learning with Q-learning by using a neural network to approximate Q-values, significantly improving performance in high-dimensional discrete state tasks.

PPO (Proximal Policy Optimization \citep{schulman2017proximal}, On-Policy RL): PPO employs clipped policy updates to ensure stability and efficiency in optimization, widely applied in both continuous and discrete control tasks.

SAC (Soft Actor-Critic \citep{haarnoja2018soft}, Off-Policy RL): SAC maximizes policy entropy to encourage exploration, significantly enhancing sample efficiency and performance in continuous action spaces.

PPO-Lag (PPO with Lagrangian Multipliers \citep{stooke2020responsive}, Safe RL): PPO-Lag extends PPO by introducing Lagrangian multipliers to handle constrained optimization, ensuring policy updates adhere to constraints while optimizing the objective.

SAC-Lag (SAC with Lagrangian Multipliers \citep{ha2021learning}, Safe RL): SAC-Lag extends SAC by incorporating Lagrangian multipliers to balance policy entropy with constraint satisfaction, making it suitable for constrained RL.

TS2C (Teacher-Student Shared Control \citep{xue2023guarded}, TSF): TS2C is a student-teacher framework where a pre-trained teacher agent guides the student agent by intervening and providing online demonstrations based on trajectory-based values, and we re-implemented this algorithm using PPO as the base method.

SOAR-ACPPO (SOAR Cognitive Architecture with Adaptive Clipping PPO \citep{zhou2024hybrid}, TSF): SOAR-ACPPO is a hybrid decision-making framework based on the TSF, where the SOAR cognitive decision model, incorporating human driving knowledge, serves as the teacher to effectively guide the student agent, which is based on the ACPPO algorithm.

BC (Behavioral Cloning \citep{pomerleau1991efficient}, IL): BC is a supervised imitation learning algorithm that directly learns policies by mimicking expert actions from example data.

GAIL (Generative Adversarial Imitation Learning \citep{ho2016generative}, IL): GAIL combines the generative adversarial framework with imitation learning, using a discriminator to distinguish between expert and learner policies, guiding the learner towards expert-level performance.

CQL (Conservative Q-Learning \citep{kumar2020conservative}, Offline RL): CQL is an offline reinforcement learning algorithm that penalizes overestimation of Q-values, ensuring
conservative policy updates, which is particularly effective when learning from static datasets.

In Safe RL algorithms, we introduce safety cost as constraints to regulate policy behavior, ensuring that the agent maximizes efficiency reward while adhering to safety constraints.
The specific parameters of the baseline algorithms are provided in Appendix \ref{Appendix:Hyper-parameters}.
Our analysis compares two aspects: model training and evaluation.

\subsection{Model Training}
\autoref{fig:model_training} presents the test results of various algorithms during the model training process.
The metrics presented include the return, safety cost, average speed from test episodes, as well as efficiency reward, safety cost, and collision counts generated from the agent's interactions with the environment every 5,000 steps.
Notably, CQL, as an offline RL algorithm, does not depend on interactions with the environment for data collection during training.
Therefore, CQL has been excluded from this comparison.

\begin{figure}[t]
	\centering
	\subcaptionbox{Test Return\label{fig:test_return}}{
		\resizebox*{5.2cm}{!}{\includegraphics{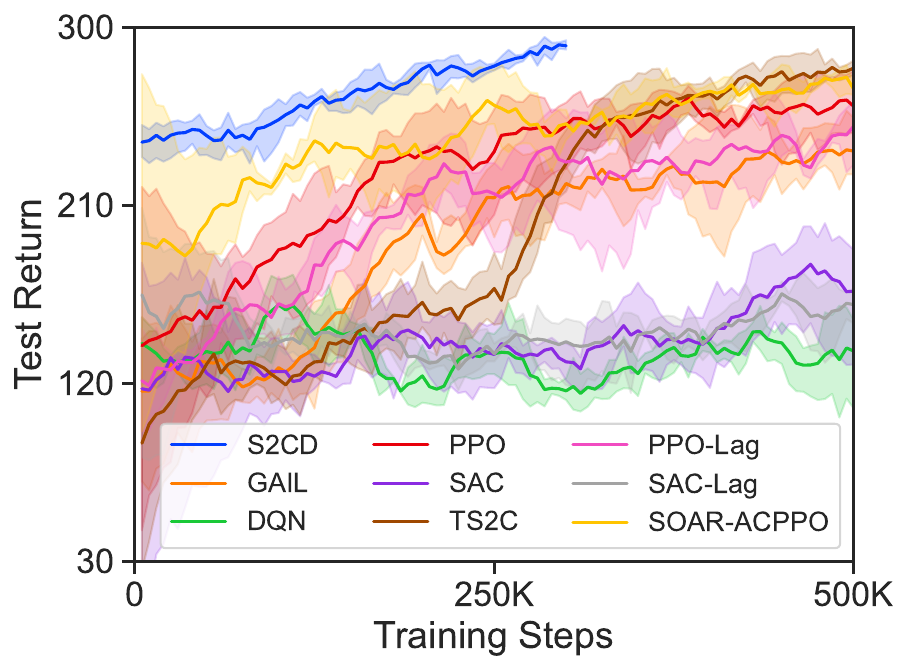}}}
	\subcaptionbox{Test Safety Cost\label{fig:test_safty_cost}}{
		\resizebox*{5.2cm}{!}{\includegraphics{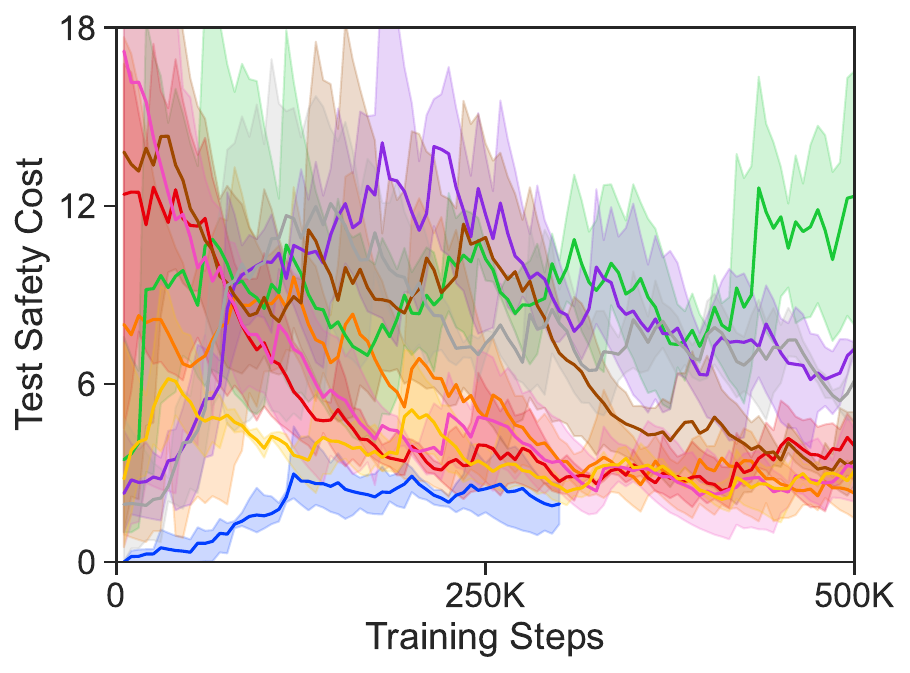}}}
	\subcaptionbox{Test Speed\label{fig:test_speed}}{
		\resizebox*{5.2cm}{!}{\includegraphics{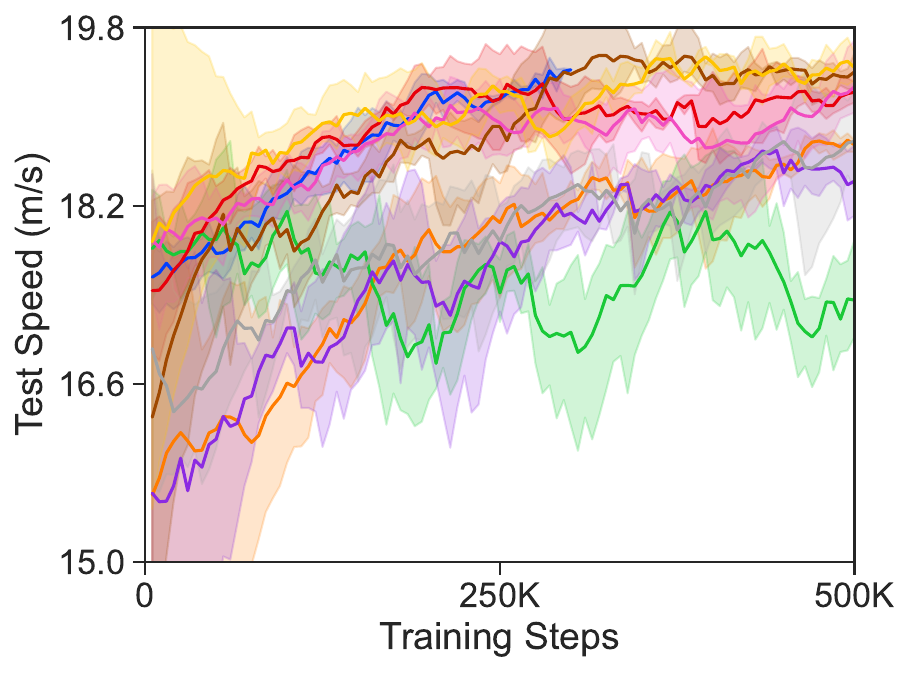}}}
	\subcaptionbox{Training Efficiency Reward\label{fig:training_efficiency_reward}}{
		\resizebox*{5.2cm}{!}{\includegraphics{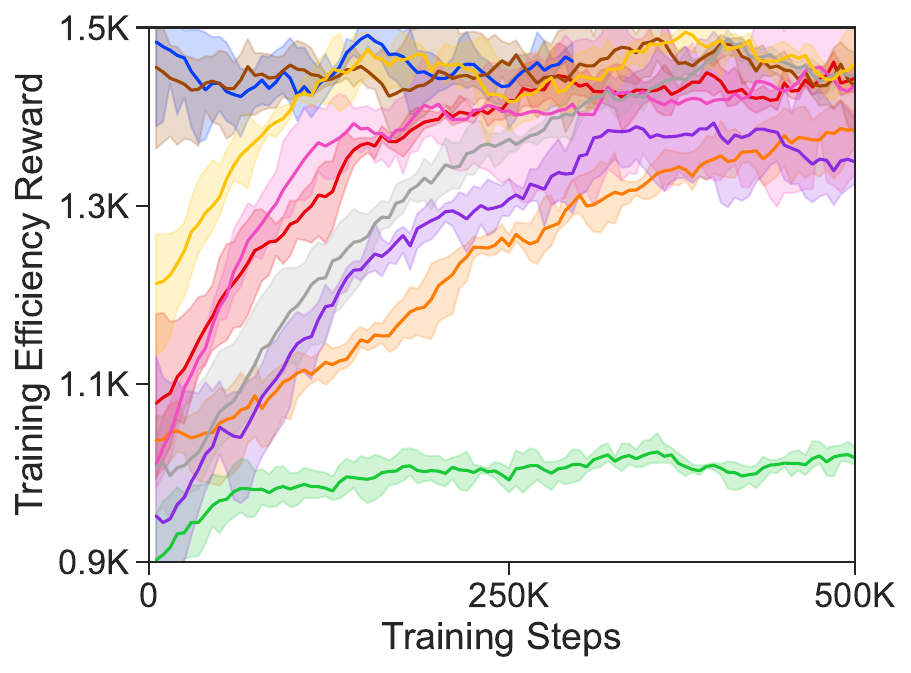}}}
	\subcaptionbox{Training Safety Cost\label{fig:training_safety_cost}}{
		\resizebox*{5.2cm}{!}{\includegraphics{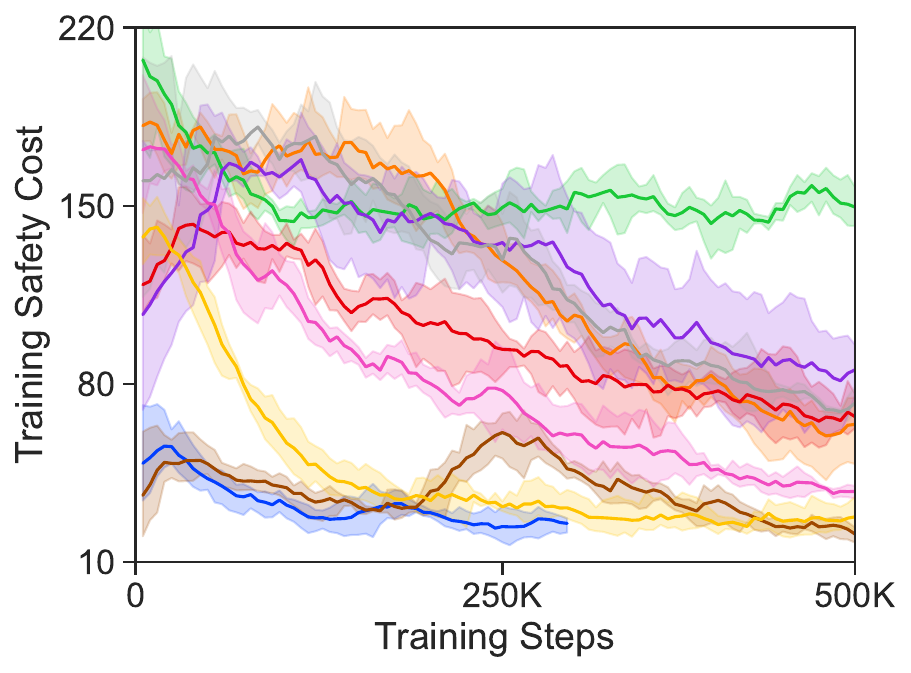}}}
	\subcaptionbox{Training Collision Counts\label{fig:training_collision_counts}}{
		\resizebox*{5.2cm}{!}{\includegraphics{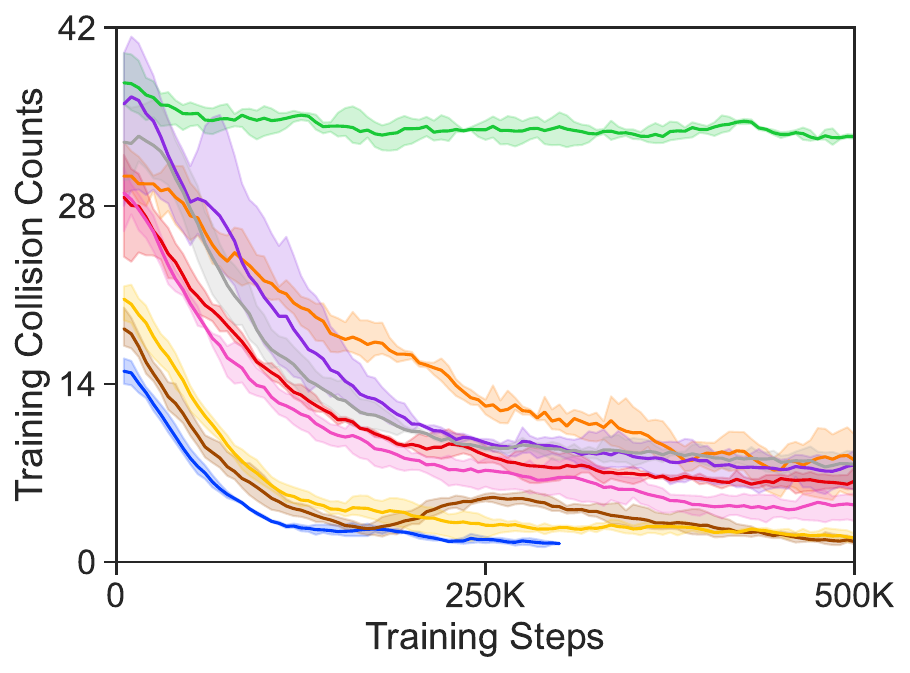}}}
	
	\caption{
		The training curves for all algorithms are shown.
		Every algorithm is trained using three random seeds, accumulating 500K training steps in every case, except for S2CD, which needs only 300K steps.
		To evaluate the performance of the algorithms at each stage, two evaluation episodes are conducted every 5,000 training steps, and the average value of these episodes is recorded as the result.
	}
	\label{fig:model_training}
\end{figure}

Throughout the training process of the algorithms, we primarily focus on the trends of key indicators, such as model performance and safety.
\autoref{fig:test_return} illustrates the test return curve as a function of training steps, clearly reflecting the overall performance of all algorithms.
As the curve indicates, guided by the teacher model, 
S2CD benefits from higher sample efficiency and learning efficiency, enabling it to acquire more accurate knowledge in the early stages.
This advantage allows it to cover longer distances without collisions, thereby achieving consistently high return values of approximately 240 from the outset.
For the TS2C algorithm, excessive teacher intervention during the early stages leads to an imbalance between positive and negative samples, and the discrepancy between the policy distributions of the teacher and the student hampers the student's ability to optimize its policy, resulting in a lower Return.
SOAR-ACPPO decides whether to intervene based on intervention probability, which prevents it from strictly following the teacher's guidance, consequently leading to a lower initial score. 
Nevertheless, SOAR-ACPPO's learning speed is significantly faster than that of other algorithms, except for S2CD.
In contrast, other RL algorithms gradually optimize their policies through environmental interactions.
However, due to limited early-stage knowledge, they experienced more frequent collisions during testing, resulting in significantly lower return values compared to S2CD.
Furthermore, \autoref{fig:training_efficiency_reward} demonstrates that the efficiency reward obtained by the S2CD and TS2C agents through interactions with the environment are considerably higher than those achieved by other algorithms, indicating that the teacher's intervention effectively enhances the student's ability to achieve high scores.
Regarding safety, \autoref{fig:training_safety_cost} illustrates the safety cost curves during the data collection period.
Due to the teacher model’s intervention in correcting the student’s unsafe actions, S2CD and TS2C guarantee that the ego vehicle keeps a safe distance from other vehicles whenever feasible.
Consequently, this leads to their safety cost during the data collection period being significantly lower than that of other algorithms.
However, as depicted in \autoref{fig:test_safty_cost}, the TS2C algorithm demonstrates poor performance in terms of test safety cost, indicating the model's inefficiency in learning.
In contrast, S2CD consistently achieves the lowest test safety cost.
\autoref{fig:training_collision_counts} illustrates that during the early data collection phase, S2CD experienced the least number of collision events.
Additionally, TS2C and SOAR-ACPPO, both benefiting from teacher guidance, also exhibited a similarly low collision probability, approximately half that of other algorithms.
Therefore, even in the initial stages of training, when the policy is not yet fully optimized, the teacher's guidance enables S2CD’s student agent to achieve performance and safety significantly superior to those of other algorithms.

During the mid-to-late stages of training, as agents accumulate data, and update their policies, the performance of other algorithms, excluding DQN, begins to improve.
Specifically, return, efficiency reward, and driving speed gradually increase, while safety cost and collision counts decrease.
Notably, the PPO algorithm and its derivatives: S2CD, TS2C, SOAR-ACPPO, PPO-Lag, and GAIL, demonstrate significantly faster performance improvements, whereas SAC and its derivative SAC-Lag learn at a slower pace.
This is because PPO-based algorithms limit the magnitude of policy updates, preventing substantial performance degradation in each iteration and ensuring that the policies steadily progress toward better solutions until convergence.
Among these, TSF-based algorithms (S2CD, TS2C, SOAR-ACPPO) demonstrated relatively superior final performance.
In contrast, because SAC-based algorithms are inherently designed for continuous action tasks, achieving satisfactory training results for the discrete action tasks in this study is challenging.
In terms of training efficiency, S2CD's enhancements in sampling and learning mechanisms result in a noticeably faster convergence speed, achieving a return greater than 290 with only 300K steps.
This sharply contrasts with other algorithms, which require 500K steps to achieve convergence while exhibiting inferior performance.
Although the safety cost of S2CD experiences a slight increase during the intermediate stages, it subsequently declines in the later stages and consistently remains lower than that of other algorithms.
Notably, S2CD experiences significantly fewer collisions than other algorithms during the entire training process, with only approximately 2 collisions occurring every 5,000 training steps in the later stage.
This indicates that S2CD demonstrates exceptionally high safety performance during the training process, markedly outperforming other algorithms.

Therefore, among conventional RL algorithms, PPO-based algorithms exhibit excellent performance during model training, while TSF-based algorithms more efficiently acquire superior strategies under teacher guidance.
In particular, the proposed S2CD algorithm exhibited the best overall performance during both the training and testing phases.

\subsection{Model Evaluation}
Upon completing model training, a final performance evaluation was conducted for all models, as illustrated in \autoref{fig:model_evaluate} and presented in \autoref{tab:evaluate_results}.
Notably, CQL consistently maintained the "follow" action during testing without performing any lane-change maneuvers.
Therefore, it is excluded from subsequent analyses.
\begin{figure}[t]
	\centering
	\subcaptionbox{Success Rate\label{fig:success_rate}}{
		\resizebox*{5.2cm}{!}{\includegraphics{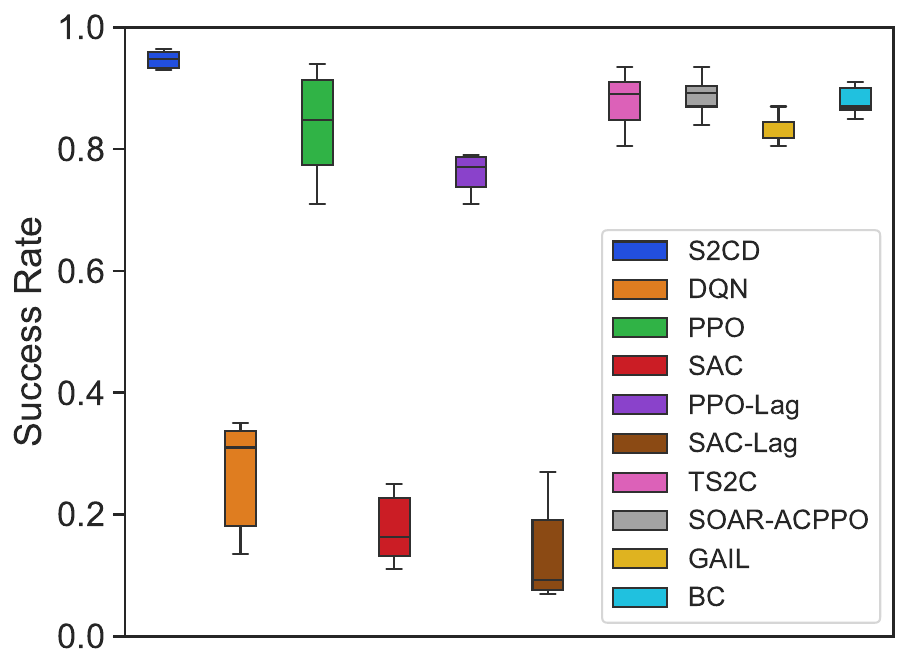}}}
	\subcaptionbox{Return\label{fig:return}}{
		\resizebox*{5.2cm}{!}{\includegraphics{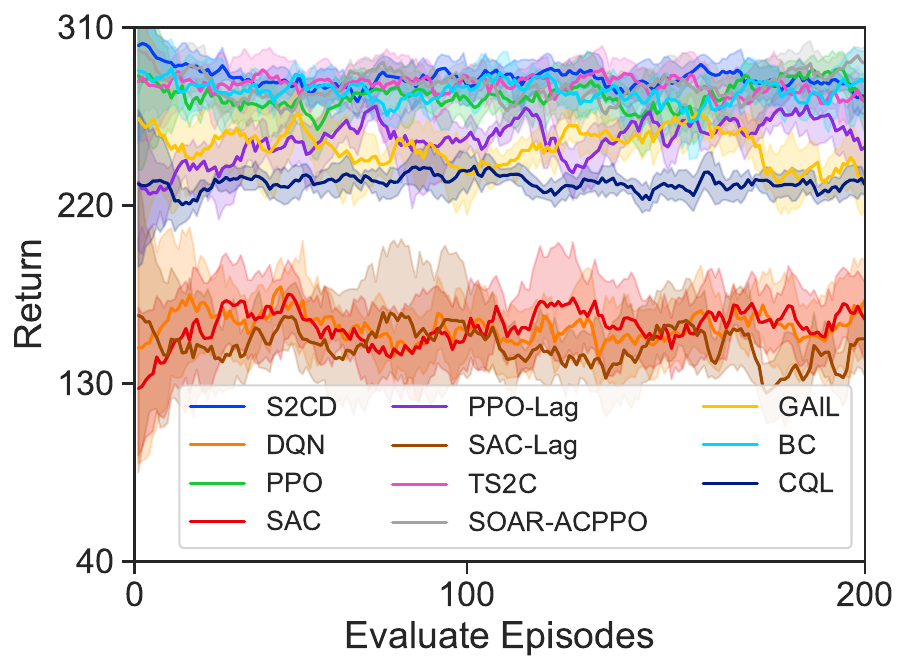}}}
	\subcaptionbox{Safety Cost\label{fig:safety_cost}}{
		\resizebox*{5.2cm}{!}{\includegraphics{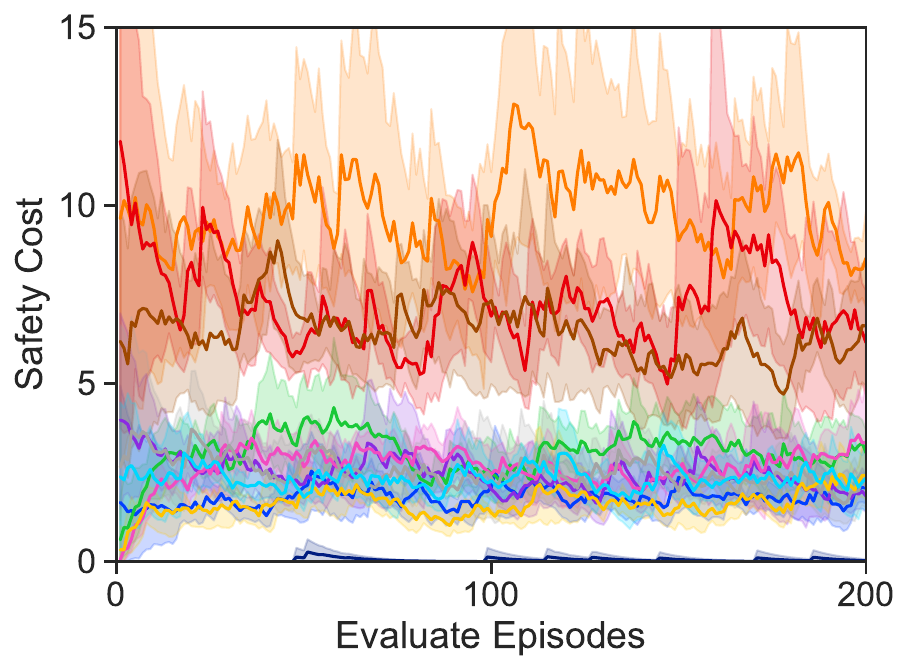}}}
	\subcaptionbox{Efficiency Reward\label{fig:efficiency_reward}}{
		\resizebox*{5.2cm}{!}{\includegraphics{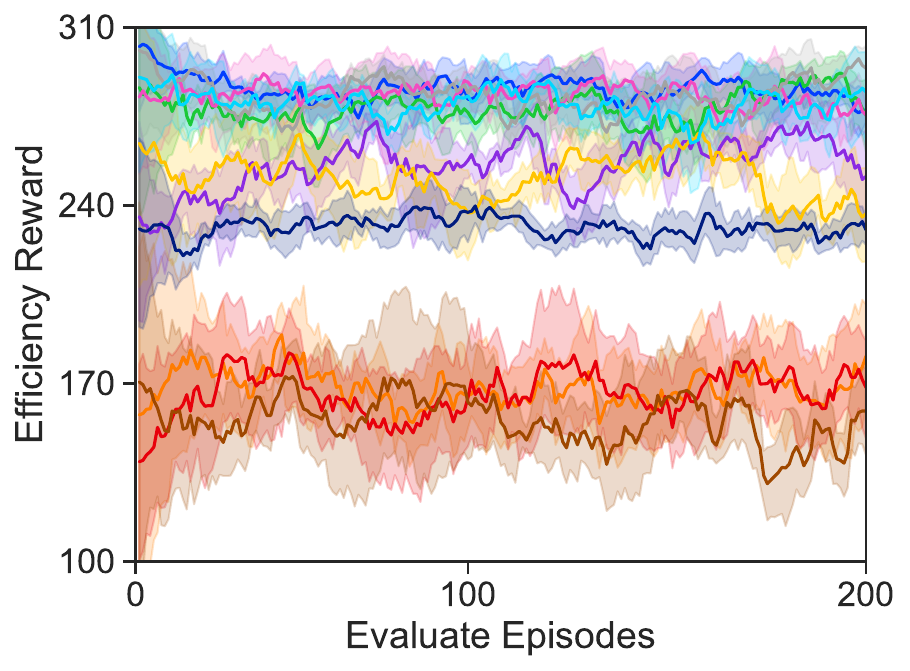}}}
	\subcaptionbox{Speed\label{fig:speed}}{
		\resizebox*{5.2cm}{!}{\includegraphics{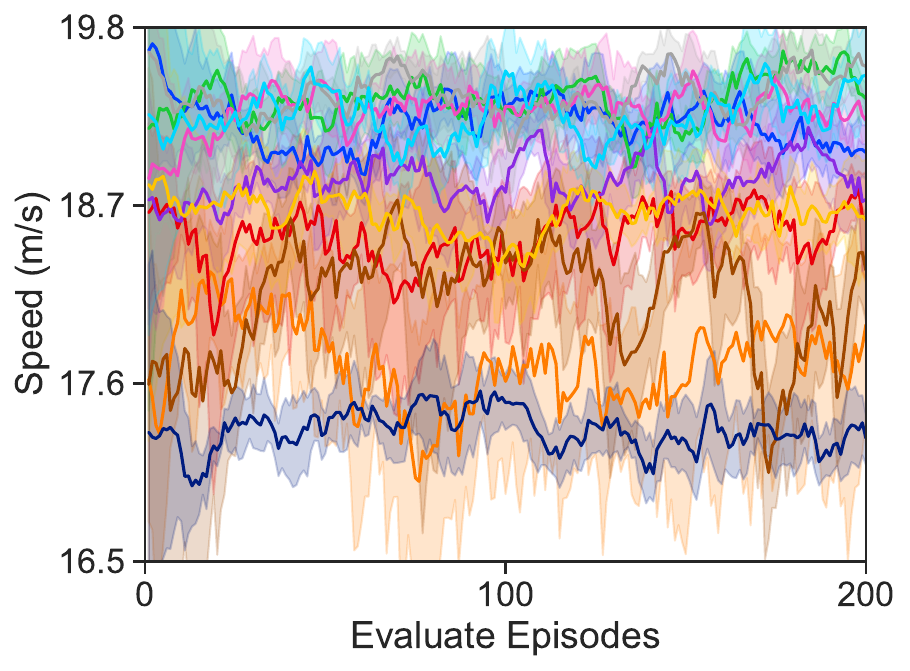}}}
	
	\caption{
		The curves of model evaluation for all algorithms.
		Each model trained with varying random seeds is evaluated twice, and each algorithm undergoes six evaluation runs, with each evaluation consisting of 200 episodes.
	}
	\label{fig:model_evaluate}
\end{figure}

\begin{table}
	\centering
	\caption{The performance of all algorithms in the medium-density traffic scenario}
	\label{tab:evaluate_results}
	{\begin{tabular}{cccccccc} 
			\toprule
			Category & Method & \makecell[c]{Training \\ Steps} & \makecell[c]{Episodic \\ Return} & \makecell[c]{Episodic \\ Reward} & \makecell[c]{Episodic \\ Cost} & \makecell[c]{Episodic \\ Speed (m/s)} & \makecell[c]{Success \\ Rate (\%)} \\
			Value-Based RL & DQN & 500K & 159.85 & 169.60 & 9.75 & 17.74 & 26.50\\
			On-Policy RL & PPO & 500K & 274.82 & 278.00 & 3.18 & \textbf{19.36} & 83.83\\
			Off-Policy RL & SAC & 500K & 160.48 & 167.50 & 7.02 & 18.51 & 17.58\\
			\multirow{2}{*}{Safe RL} & PPO-Lag & 500K & 254.29 & 256.68 & 2.39 & 18.85 & 76.00\\
			& SAC-Lag & 500K & 148.80 & 155.28 & 6.48 & 18.17 & 13.67\\
			\multirow{2}{*}{TSF} & TS2C & 500K & 280.24 & 283.05 & 2.80 & 19.30 & 87.83\\
			& SOAR-ACPPO & 500K & 280.56 & 283.08 & 2.52 & 19.30 & 88.83 \\
			\multirow{2}{*}{IL} & GAIL & 500K & 248.17 & 249.90 & \textbf{1.73} & 18.65 & 83.67\\
			& BC & $/$ & 277.53 & 279.87 & 2.35 & 19.27 & 87.90\\
			Offline RL & CQL & $/$ & 231.47 & 231.51 & $/$ & 17.31 & $/$ \\
			\midrule
			\textbf{Ours} & \textbf{S2CD} & \textbf{300K} & \textbf{283.14} & \textbf{284.96} & 1.82 & 19.19 & \textbf{94.67}\\
			\bottomrule
	\end{tabular}}
\end{table}

The results indicate that PPO-based algorithms significantly outperform both SAC-based algorithms and DQN, especially TSF-based algorithms, consistent with their performance during the training phase.
Notably, S2CD demonstrates superior performance in two key metrics: success rate and return value, achieving a success rate of 94.67\% and a return value of 283.14.
Although S2CD’s safety cost (1.82) and speed (19.19 m/s) are not the highest values, they are very close to the best values of 1.73 and 19.36 m/s, respectively.
This slight discrepancy may be attributed to the fewer training steps undertaken by S2CD.
It is important to note that, for IL algorithms, an expert policy must first be trained using PPO.
This expert policy is subsequently used to collect a substantial amount of high-quality data for training the GAIL and BC models.
Consequently, although GAIL and BC achieve performance levels close to those of S2CD, their training processes are more complex and time-consuming.
In contrast, the PPO algorithm is not influenced by other policies, exploring the environment entirely autonomously.
Despite its relatively low success rate and safety performance, PPO achieves the highest driving speed of 19.36 m/s by prioritizing higher efficiency reward.
Both TS2C and SOAR-ACPPO not only account for the role of teacher guidance but also ensure that the student agent can freely explore the environment in the later stages by reducing the intervention probability.
Consequently, they achieved higher driving speeds (19.30 m/s and 19.30 m/s) and lower safety cost (2.80 and 2.52).
Their return values, reaching 280, are second only to the highest-performing S2CD.
As shown in \autoref{fig:success_rate}, S2CD maintained stable performance due to the effective guidance of the teacher, achieving a success rate variance of only 3.5\% across 6 evaluations, which is significantly lower than that of the PPO algorithm alone.
Moreover, TS2C and SOAR-ACPPO also achieved exceptionally high success rates (87.83\% and 88.83\%), indicating that teacher guidance not only enhances the performance potential of the student models but also improves their safety thresholds.
The performance of the SAC-based and DQN algorithms remained poor, consistent with their training phase results, with success rates not exceeding 30\% and return value below 200, highlighting a significant gap compared to the PPO-based algorithms.

To further validate the generalizability of the S2CD framework, we examined all algorithms in high-density and low-density scenarios.
For high-density scenario, vehicle spacing ranged between 20 and 50 meters, with an average of about 29 vehicles per lane per kilometer.
The evaluation results are shown in \autoref{tab:high-density_results}.
In this scenario, the reduced vehicle spacing increases the complexity of the environment, further degrading the performance of the already underperforming DQN, SAC-Lag and GAIL algorithms.
However, as lane-change opportunities become more limited, the frequency of lane-change by the ego vehicle significantly dropped, resulting in a reduced probability of collisions and, consequently, an improvement in the performance of other algorithms.
S2CD continues to demonstrate the highest success rate (96.83\%) and return value (143.88), while its safety cost (0.77) and speed (15.02 m/s) are also very close to the best values of 0.63 and 15.11 m/s.
For the low-density scenario, vehicle spacing ranged from 90 to 120 meters, with an average of about 9 vehicles per lane per kilometer.
The evaluation results are shown in \autoref{tab:low-density_results}.
In this scenario, vehicles could achieve higher speeds without frequent lane changes, significantly reducing collision probability and thus improving the safety and efficiency of all algorithms.
S2CD achieved the highest success rate (98.75\%) and return value (341.01), while also demonstrating the lowest safety cost (0.51).
Additionally, its driving speed (21.32 m/s) is close to the highest speed achieved by the PPO algorithm, which is 21.68 m/s.
Similarly, TS2C and SOAR-ACPPO, which are based on the TSF framework, ranked just behind S2CD in overall performance, outperforming other algorithms.
These results are consistent with those in the moderate-density scenario, further validating the effectiveness of teacher-guided learning.

\begin{table}[t]
	\centering
	\caption{The performance of all algorithms in the high-density traffic scenario}
	\label{tab:high-density_results}
	{\begin{tabular}{cccccc} 
			\toprule
			Method & \makecell[c]{Episodic \\ Return} & \makecell[c]{Episodic \\ Reward} & \makecell[c]{Episodic \\ Cost} & \makecell[c]{Episodic \\ Speed (m/s)} & \makecell[c]{Success \\ Rate (\%)} \\
			\midrule
			DQN & 46.48 & 66.32 & 19.84 & 14.12 & 18.83\\
			PPO & 141.02 & 144.44 & 3.42 & \textbf{15.11} & 92.83\\
			SAC & 58.28 & 76.93 & 18.65 & 14.99 & 18.00\\
			PPO-Lag & 127.93 & 134.56 & 6.63 & 15.10 & 79.17\\
			SAC-Lag & 62.78 & 71.55 & 8.77 & 14.83 & 13.50\\
			TS2C & 141.30 & 142.04 & 0.75 & 15.06 & 93.00\\
			SOAR-ACPPO & 142.89 & \textbf{146.37} & 3.47 & 15.11 & 94.25\\
			GAIL & 120.23 & 123.86 & 3.63 & 15.07 & 75.00\\
			BC & 141.70 & 142.32 & \textbf{0.63} & 14.99 & 93.75\\
			CQL & 140.12 & 140.12 & $/$ & 14.88 & $/$ \\			
			\midrule
			\textbf{S2CD} & \textbf{143.88} & 144.66 & 0.77 & 15.02 & \textbf{96.83}\\
			\bottomrule
	\end{tabular}}
\end{table}

\begin{table}
	\centering
	\caption{The performance of all algorithms in the low-density traffic scenario}
	\label{tab:low-density_results}
	{\begin{tabular}{cccccc} 
			\toprule
			Method & \makecell[c]{Episodic \\ Return} & \makecell[c]{Episodic \\ Reward} & \makecell[c]{Episodic \\ Cost} & \makecell[c]{Episodic \\ Speed (m/s)} & \makecell[c]{Success \\ Rate (\%)} \\
			\midrule
			DQN & 274.54 & 278.86 & 4.32 & 20.55 & 57.67\\
			PPO & 339.63 & 341.11 & 1.48 & \textbf{21.68} & 90.00\\
			SAC & 274.75 & 278.91 & 4.15 & 20.71 & 52.50\\
			PPO-Lag & 328.58 & 330.52 & 1.94 & 21.18 & 90.50\\
			SAC-Lag & 261.22 & 267.44 & 6.21 & 20.75 & 44.33\\
			TS2C & 336.94 & 338.03 & 1.09 & 21.51 & 91.92\\
			SOAR-ACPPO & 339.95 & 340.82 & 0.87 & 21.52 & 95.00\\
			GAIL & 320.32 & 321.41 & 1.09 & 20.99 & 89.50\\
			BC & 340.60 & \textbf{341.57} & 0.97 & 21.46 & 91.67\\
			CQL & 274.77 & 274.80 & $/$ & 18.78 & $/$ \\
			\midrule
			\textbf{S2CD} & \textbf{341.01} & 341.52 & \textbf{0.51} & 21.32 & \textbf{98.75}\\
			\bottomrule
	\end{tabular}}
\end{table}

The preceding discussion emphasizes that our proposed S2CD framework effectively balances the benefits of teacher guidance with autonomous exploration, thereby establishing a driving strategy that integrates both training efficiency and safety.
Importantly, S2CD demonstrates the most superior performance among all the compared algorithms.

\subsection{Ablation Study}
To assess the contributions of every part within the S2CD framework and validate its effectiveness, we conducted ablation experiments.
By gradually removing or replacing specific modules within the framework, we observed variations in performance.
The ablation study validated the framework’s design rationale and demonstrated the role that each module plays in enhancing performance.

\subsubsection{The Impact of Different Teacher Quality}
To explore the impact of guidance from teachers of varying quality on student performance, we trained a high-quality teacher model and a low-quality teacher model in a simple simulation environment (Highway-Env), as well as an additional teacher model in a complex simulation environment (Carla).
These 3 teacher models were subsequently applied to the S2CD framework to guide the learning of the student agent.

\autoref{tab:calculate_costs} presents the computational costs associated with different teacher models, including training steps, number of processes, time spent, GPU memory footprint, memory footprint, and CPU utilization.
In the Highway-Env environment, training a low-quality teacher model requires 200K steps, whereas achieving a high-quality teacher model can be accomplished by simply doubling the number of data collection processes.
Although the training steps and CPU utilization approximately doubled, the increases in time spent, GPU memory footprint, and memory footprint were minimal, with the training time remaining almost unchanged.
Therefore, in the Highway-Env environment, a high-quality teacher model can be obtained at minimal computational cost.
In contrast, training for the same 400K steps in the Carla simulation environment required nearly 5 times the time spent compared to Highway-Env, with the GPU memory footprint, memory footprint, and CPU utilization more than doubling.
This indicates that the cost of training teacher models in complex simulation environments is significantly greater than in simpler ones.
The three trained teacher models were integrated into the S2CD framework to guide the learning of the student in Carla, with the training and evaluation results presented in \autoref{fig:dif_teacher_results} and \autoref{tab:dif_teacher_results}.

\begin{table}
	\centering
	\caption{Computational cost for different teacher models}
	\label{tab:calculate_costs}
	{\begin{tabular}{ccccccc} 
			\toprule
			\makecell[c]{Computational Cost} & \makecell[c]{Training \\ Steps} & \makecell[c]{Training \\ Processes} & \makecell[c]{Time \\ Spent (h)} & \makecell[c]{GPU Memory \\ Footprint (GB)} & \makecell[c]{Memory \\ Footprint (GB)} & \makecell[c]{CPU \\ Utilization (\%)} \\
			\midrule			
			Simple Teacher-High & 400K & 4 & 1.98 & 4.68 & 3.82 & 20.38\\
			Simple Teacher-Low & 200K & 2 & 1.96 & 3.03 & 3.06 & 12.60\\
			Complex Teacher & 400K & 2 & 9.58 & 10.60 & 8.74 & 67.18\\
			\bottomrule
	\end{tabular}}
\end{table}

\begin{figure}[t]
	\centering
	\subcaptionbox{Test Return\label{fig:dif_test_return}}{
		\resizebox*{5.2cm}{!}{\includegraphics{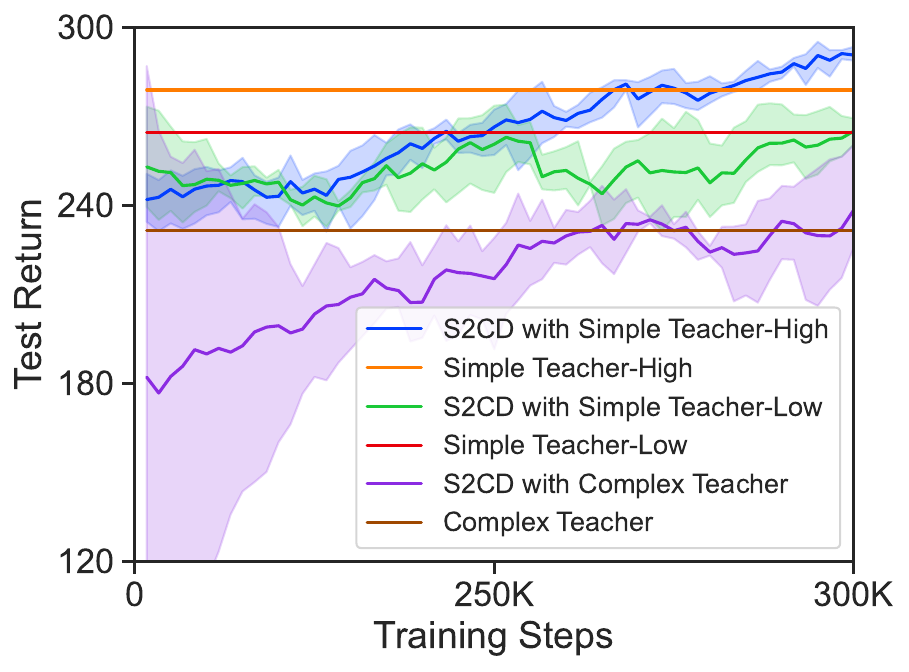}}}\hspace{1cm}
	\subcaptionbox{Test Safety Cost\label{fig:dif_test_safety_cost}}{
		\resizebox*{5.2cm}{!}{\includegraphics{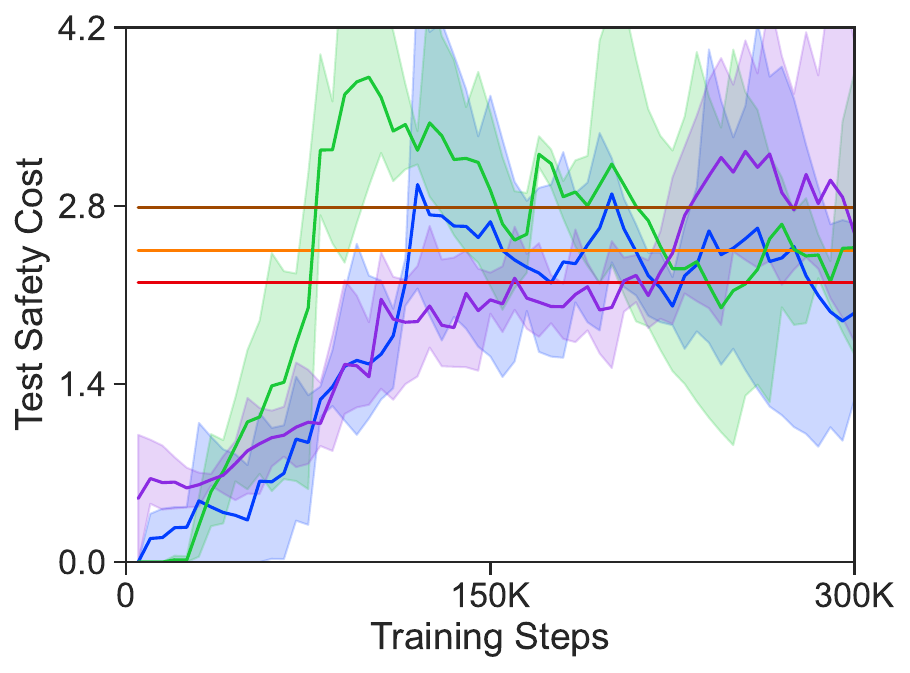}}}
	\subcaptionbox{Training Collision Counts\label{fig:dif_training_collision_counts}}{
		\resizebox*{5.2cm}{!}{\includegraphics{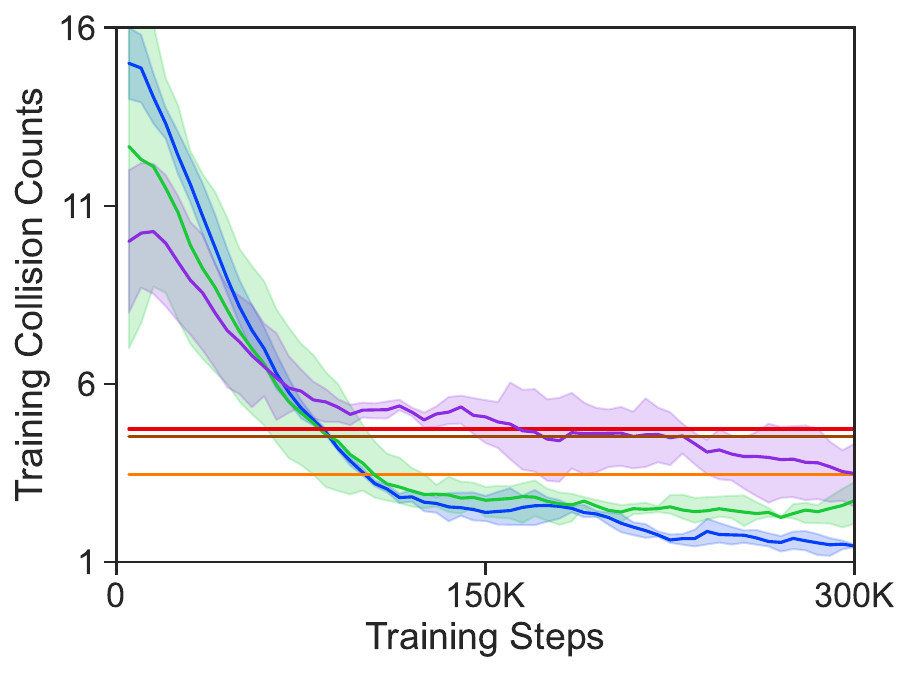}}}\hspace{1cm}
	\subcaptionbox{Success Rate\label{fig:dif_success_rate}}{
		\resizebox*{5.2cm}{!}{\includegraphics{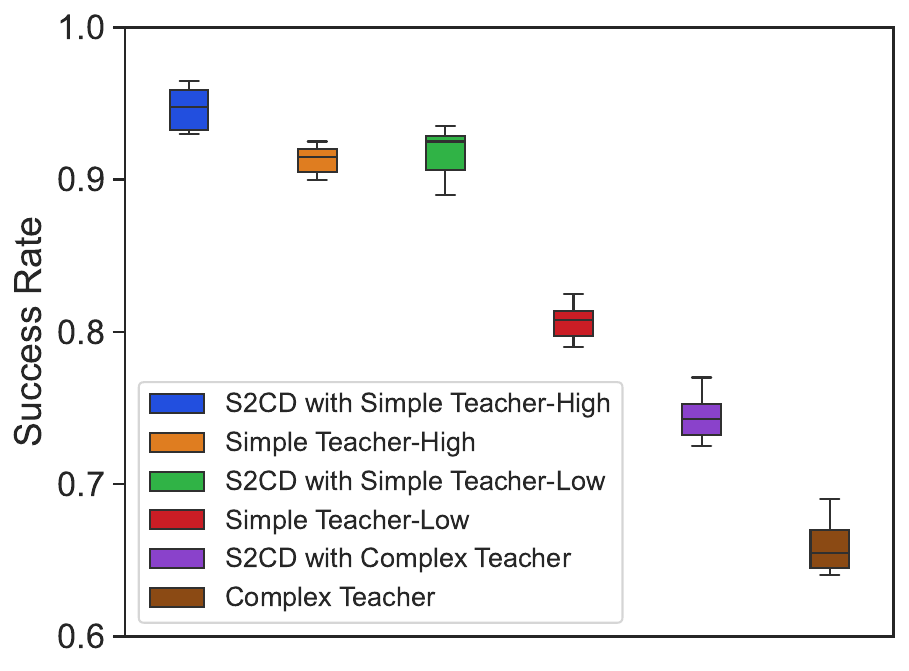}}}
	
	\caption{
		The training and evaluation processes involve different teacher models:
		1. Model Training: Every algorithm is trained with three random seeds, accumulating 300K training steps per case.
		Two evaluation episodes are conducted every 5,000 training steps, and their average value is recorded as the result.
		Throughout model training, we monitor the return value, safety cost, and collision counts.
		2. Model Evaluation: Every model trained with different random seeds is evaluated twice.
		In total, every algorithm undergoes six evaluation runs, with each run consisting of 200 episodes.
		During model evaluation, the success rate is recorded.
	}
	\label{fig:dif_teacher_results}
\end{figure}

\begin{table}
	\centering
	\caption{The performance of S2CD under the guidance of teachers at different levels}
	\label{tab:dif_teacher_results}
	{\begin{tabular}{cccccc} 
			\toprule
			Experiment & \makecell[c]{Episodic \\ Return} & \makecell[c]{Episodic \\ Reward} & \makecell[c]{Episodic \\ Cost} & \makecell[c]{Episodic \\ Speed (m/s)} & \makecell[c]{Success \\ Rate (\%)} \\
			\midrule			
			S2CD with Simple Teacher-High & \textbf{283.14} & \textbf{284.96} & \textbf{1.82} & 19.19 & \textbf{94.67}\\
			Simple Teacher-High & 281.21 & 283.69 & 2.48 & \textbf{19.28} & 91.30\\
			S2CD with Simple Teacher-Low & 272.19 & 274.32 & 2.13 & 18.98 & 91.75\\
			Simple Teacher-Low & 269.11 & 271.93 & 2.82 & 19.20 & 80.67\\
			S2CD with Complex Teacher & 243.59 & 245.57 & 1.98 & 18.87 & 74.42\\
			Complex Teacher & 238.64 & 241.03 & 2.39 & 19.07 & 66.00\\
			\bottomrule
	\end{tabular}}
\end{table}

The results demonstrate that higher-quality teacher models enable the student to acquire more accurate knowledge, resulting in improved performance during the later stages of training and in the final evaluation.
Specifically, the return, the probability of avoiding collisions and the task success rate during training were higher with the high-quality teacher compared to the low-quality teacher.
This indicates that high-performance teachers can significantly enhance the safety and performance of the student's training.
Additionally, we implemented the weaning mechanism to gradually decrease the teacher's influence on the student agent in the later stages, allowing the student to explore the environment more independently.
This mechanism elevated the algorithm's performance ceiling, enabling the student to ultimately surpass the teacher in key metrics such as return and success rates.
Notably, although the teacher model trained in the Carla environment consumed significant computational resources, its performance was inferior to that of the low-quality teacher trained in the Highway-Env environment.
The return and success rates were only 238.64 and 66.00\%, respectively, significantly lower than those of the teachers trained in simpler environments.
Furthermore, when guided by this teacher model, the student agent's final performance was lower than that of the simple teacher.
The return was approximately 40 lower, and the key metric, success rate, was about 20\% lower.
This demonstrates that training teacher models in a simple simulation environment not only saves substantial computational costs but also yields better performance, providing more effective guidance to the student agent.

\subsubsection{Ablations of Guardian Mechanism}
In this work, we introduce substantial enhancements to the traditional PPO algorithm.
First, we utilized data from both the teacher and student models concurrently, effectively doubling the available training data and significantly improving sample efficiency.
We then adapted the clipping factor of the algorithm based on the varying importance of these two data types, further enhancing the algorithm's learning efficiency.
Additionally, to ensure that the student policy rapidly approximates the teacher policy in the early stages of training, we employed the KL divergence between the two policies as a constraint of policy update.
We subsequently applied the Lagrangian method to transform the optimization problem into an unconstrained format for policy updates.
Finally, we implemented an annealing mechanism to preserve the agent’s exploratory capabilities, ensuring it can overcome the limitations of the teacher's performance.
By incorporating the 4 modules described above, our proposed algorithm not only significantly enhances learning efficiency but also improves the model's performance and safety.
To assess the contribution of each module to the overall performance, we conducted ablation experiments.
In these experiments, we sequentially removed each module and analyzed the resulting performance changes to verify the necessity of each component.
The training and evaluation results from the ablation experiments are presented in \autoref{fig:ablations_results} and \autoref{tab:ablations_results}.

\begin{figure}[t]
	\centering
	\subcaptionbox{Test Return\label{fig:ablation_test_return}}{
		\resizebox*{5.2cm}{!}{\includegraphics{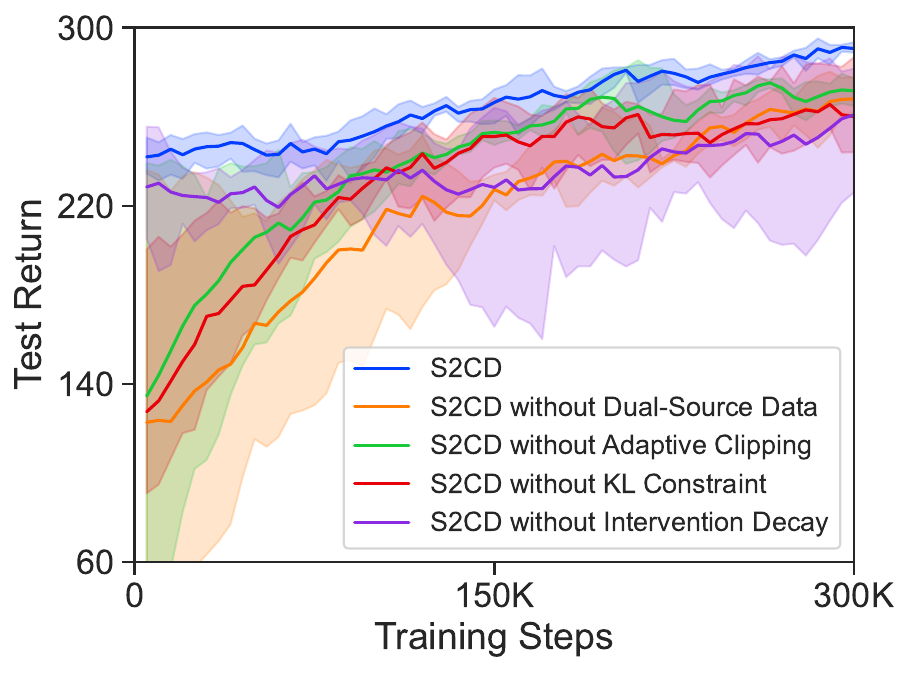}}}
	\subcaptionbox{Test Cost\label{fig:ablation_test_safety_cost}}{
		\resizebox*{5.2cm}{!}{\includegraphics{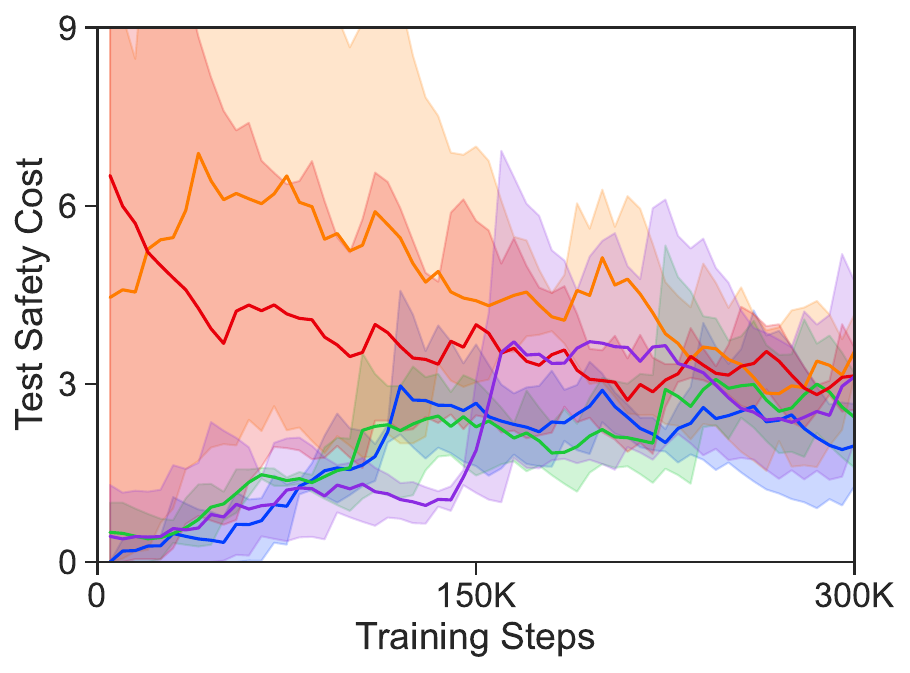}}}
	\subcaptionbox{Test Speed\label{fig:ablation_test_speed}}{
		\resizebox*{5.2cm}{!}{\includegraphics{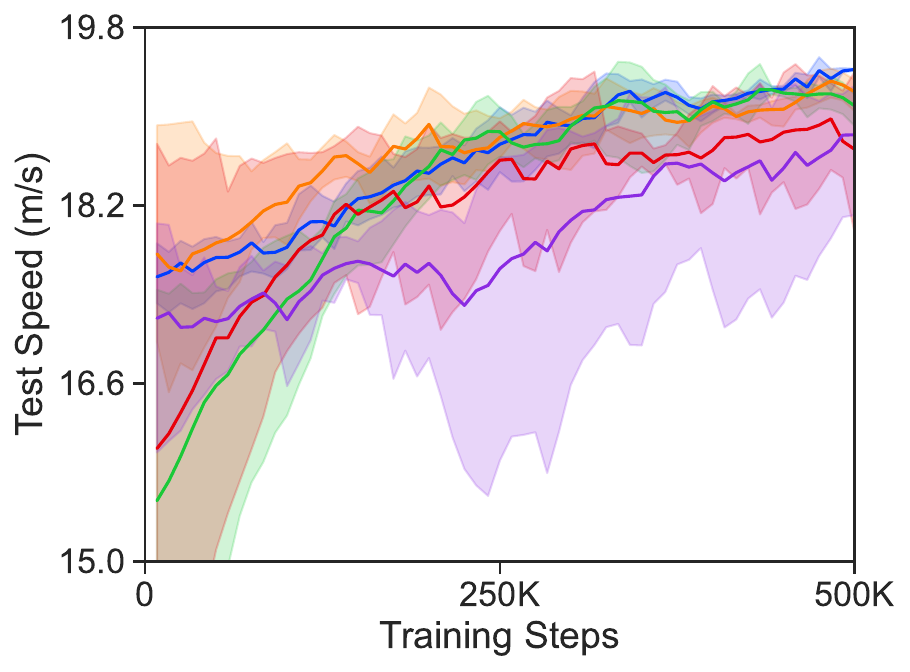}}}
	\subcaptionbox{Training Collision Counts\label{fig:ablation_training_collision_counts}}{
		\resizebox*{5.2cm}{!}{\includegraphics{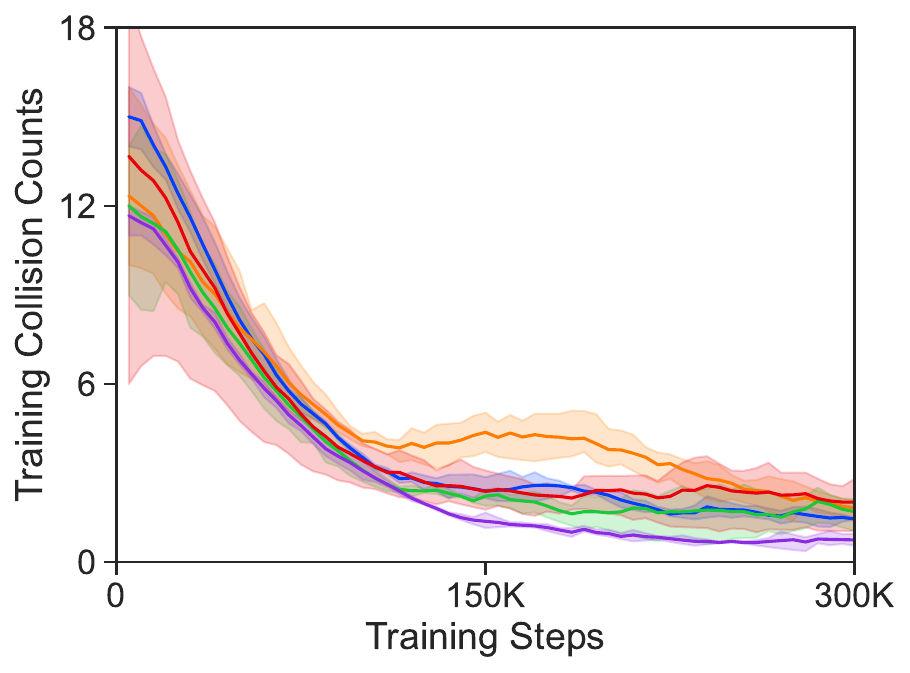}}}
	\subcaptionbox{Success Rate\label{fig:ablation_success_rate}}{
		\resizebox*{5.2cm}{!}{\includegraphics{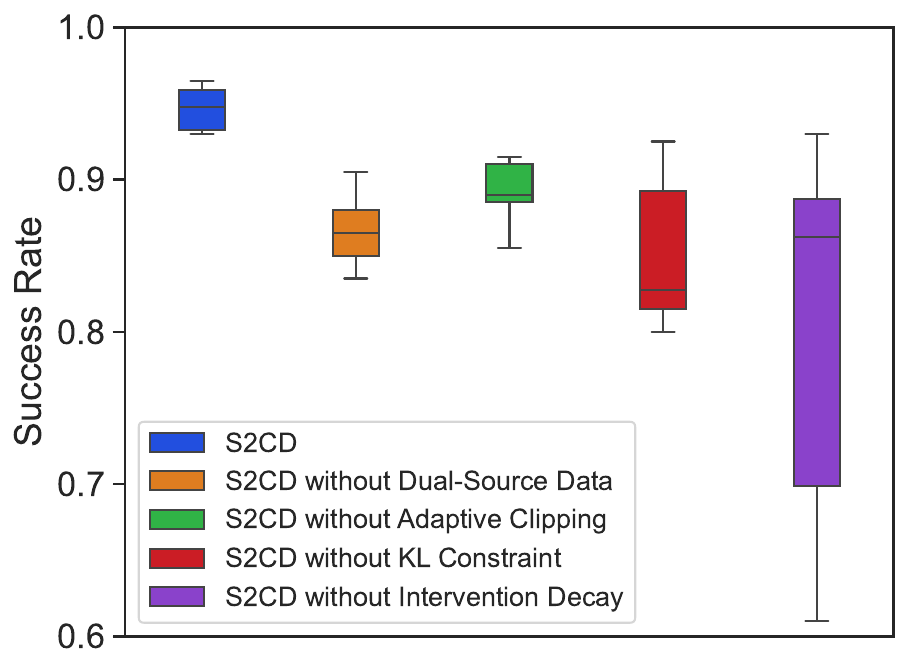}}}
	\caption{
		The training and evaluation processes of the ablation experiment. 
		1. Model Training:
		Each algorithm is trained with three different random seeds, accumulating a total of 300K training steps for each case. 
		Two evaluation episodes are conducted for every 5,000 training steps, and the average value of these two episodes is taken as the result. 
		During model training, we record the return value, safety cost, speed, and collision counts.
		2. Model Evaluation:
		Each model trained with varying random seeds is evaluated twice, and each algorithm undergoes six evaluation runs, with each evaluation consisting of 200 episodes. 
		For model evaluation, we record the success rate.
	}
	\label{fig:ablations_results}
\end{figure}

\begin{table}
	\centering
	\caption{The performance of S2CD in ablation experiments}
	\label{tab:ablations_results}
	{\begin{tabular}{cccccc} 
			\toprule
			Experiment & \makecell[c]{Episodic \\ Return} & \makecell[c]{Episodic \\ Reward} & \makecell[c]{Episodic \\ Cost} & \makecell[c]{Episodic \\ Speed (m/s)} & \makecell[c]{Success Rate \\ (\%)} \\
			\midrule			
			S2CD & \textbf{283.14} & \textbf{284.96} & \textbf{1.82} & 19.19 & \textbf{94.67}\\
			S2CD without Dual-Source Data & 277.28 & 280.45 & 3.16 & \textbf{19.32} & 86.70\\
			S2CD without Adaptive Clipping & 274.24 & 276.68 & 2.45 & 19.08 & 89.10\\
			S2CD without KL Constraint & 262.09 & 265.94 & 3.85 & 18.63 & 85.08\\
			S2CD without Intervention Decay & 255.27 & 259.00 & 3.73 & 18.74 & 80.08\\
			\bottomrule
	\end{tabular}}
\end{table}

The experimental results indicate that the performance of the S2CD framework declines to some extent with the removal of any module.
Specifically, when the teacher model's intervention does not gradually diminish, the student performs well in the early stages of training and maintains a low collision rate throughout.
However, in the later stages, the student struggles to break free from the influence of the teacher's strategy.
This not only leads to the student learning many of the teacher's incorrect instructions but also hinders the student's ability to explore freely and acquire more accurate knowledge.
Ultimately, the return and success rates were only 255.27 and 80.08\%, respectively, with a relatively low average speed of 18.74 m/s.
This indicates that the gradual reduction of intervention is crucial for ensuring that the student agent ultimately surpasses the performance of the teacher model.
As shown in \autoref{fig:ablation_test_return}, the removal of the "Dual-Source Data", "Adaptive Clipping", and "KL Constraint" modules significantly reduces the framework's performance in the early stages of training and negatively impacts the overall performance, leading to lower final outcomes for the model.
The absence of these three modules diminishes the model's learning efficiency, thereby limiting the knowledge gained by the student agent.
It is important to note that due to the imperfections in the teacher model's Return network and Q-Value network components, the use of additional training data and the inclusion of KL constraints carry the risk of the student learning incorrect information.
This risk must be mitigated by gradually reducing the teacher's intervention.

The adaptive clipping module primarily accelerates the optimization process.
Thus, after its removal, the model's final performance remains closest to that of the complete framework, with its success rate being only 5.57\% lower than that of the full framework.
The analysis above indicates that the "Dual-Source Data", "Adaptive Clipping", and "KL Constraint" modules primarily enhance the early-stage learning efficiency of the model, enabling the student to quickly absorb the knowledge provided by the teacher.
In contrast, the intervention reduction module improves the performance ceiling of the student model by limiting excessive teacher intervention, aligning with the design goals of the S2CD framework.

\section{Conclusion}
\label{sec:conclusion}

This paper introduces a novel framework, S2CD, based on knowledge transfer techniques.
The S2CD framework first trains a teacher model in a simplified simulation environment and then utilizes this model to guide the student agent in a more complex environment, enhancing the safety and efficiency of the training process.
The paper also presents an innovative RL algorithm, ACPPO+, which leverages samples generated by both teacher and student policies while dynamically adjusting the clipping factor based on sample importance, thereby improving learning efficiency.
Moreover, the KL divergence between the teacher’s and student’s policies is incorporated as a constraint in model updates, solved using the Lagrangian method to enable the student agent to quickly adopt the policy of teacher.
To enhance the student agent’s performance, a gradual weaning strategy reduces teacher intervention, ensuring that the student becomes more autonomous and independently explores optimal policies.
Experimental results in highway lane-change scenarios demonstrate that compared to traditional RL, IL, Safe RL, TSF, and Offline RL, the S2CD framework significantly enhances both learning efficiency and model performance, while also reducing training costs.
Most importantly, even with suboptimal teacher performance, the S2CD framework enhances safety during training.
The framework is also applicable beyond varying simulation environments and offers a foundation for future knowledge transfer research between simulation and real-world environments.

The primary limitation of this study is that all experiments were conducted in simulated environments, without utilizing teacher models trained in simulation environments to guide student models in real-world conditions.
Therefore, the next step in our research will involve real-world vehicle experiments to further validate the effectiveness and practicality of the proposed framework.
Additionally, this study has not addressed mixed traffic scenarios involving intelligent and traditional vehicles, which represents an important direction for future research to expand the applicability and coverage of the S2CD framework.

\bibliographystyle{elsarticle-harv} 
\bibliography{ref}

\appendix

\section{Proof of Theorem 3}
\label{Appendix:Theorem_3}
\begin{theorem}[Restatement of Theorem 3]
	With the switch function, the return of the mixed behavior policy $J(\theta_\text{{mix}}) $ is lower and upper bounded by:
	\begin{equation}
		\label{eq:effect_bound_1}
		J(\theta_{t}) + \frac{\sqrt{2}(1 - \omega)R_{\max}}{(1 - \gamma)^2} \sqrt{H - \kappa}
		\geq J(\theta_\text{{mix}}) 
		\geq J(\theta_{t}) - \frac{\sqrt{2}(1 - \omega)R_{\max}}{(1 - \gamma)^2} \sqrt{H - \kappa}
	\end{equation}
\end{theorem}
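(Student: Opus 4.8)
The plan is to build directly on the performance-difference identity already established as \autoref{eq:new_old_gap}, instantiated with the new policy $\pi^{\text{mix}}$ and the old policy $\pi^{t}$:
\[
J(\theta_{\text{mix}}) - J(\theta_{t}) = \frac{1}{1-\gamma}\,\mathbb{E}_{s\sim\nu^{\pi^{\text{mix}}}}\,\mathbb{E}_{a\sim\pi^{\text{mix}}(\cdot|s)}\!\left[A^{\pi^{t}}(s,a)\right].
\]
The two-sided bound in \autoref{eq:effect_bound} is then nothing more than a bound on the absolute value of this right-hand side, so the whole argument is to estimate $\big|\,\mathbb{E}_{s}\mathbb{E}_{a\sim\pi^{\text{mix}}}[A^{\pi^{t}}(s,a)]\,\big|$.

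The first key step exploits the structure of the switch function \autoref{eq:switch_function}. On states where the teacher intervenes ($\mathcal{T}(s)=1$) we have $\pi^{\text{mix}}(\cdot|s)=\pi^{t}(\cdot|s)$, so the inner expectation $\mathbb{E}_{a\sim\pi^{t}}[A^{\pi^{t}}(s,a)]=0$ by the defining property of the advantage. Hence only the non-intervention states contribute, and since these carry weight $1-\omega$ under the visitation distribution, the difference collapses to $\tfrac{1-\omega}{1-\gamma}$ times an expectation of $\mathbb{E}_{a\sim\pi^{s}}[A^{\pi^{t}}(s,a)]$ taken over the non-intervention states. This is precisely where the factor $(1-\omega)$ in the bound comes from.

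Next I would convert this residual term into a policy-divergence estimate. Using $\mathbb{E}_{a\sim\pi^{t}}[A^{\pi^{t}}]=0$ once more, I write $\mathbb{E}_{a\sim\pi^{s}}[A^{\pi^{t}}] = \mathbb{E}_{a\sim\pi^{s}}[A^{\pi^{t}}] - \mathbb{E}_{a\sim\pi^{t}}[A^{\pi^{t}}]$ and bound this by $2\,\|A^{\pi^{t}}\|_{\infty}\,D_{TV}\!\big(\pi^{s}(\cdot|s),\pi^{t}(\cdot|s)\big)$. Bounding the advantage via $\|A^{\pi^{t}}\|_{\infty}\le R_{\max}/(1-\gamma)$ supplies the remaining $1/(1-\gamma)$ factor and the $R_{\max}$, which together with the outer $1/(1-\gamma)$ yields the $(1-\gamma)^{2}$ in the denominator. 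Finally Pinsker's inequality, $D_{TV}\le\sqrt{\tfrac12 D_{KL}(\pi^{t}\|\pi^{s})}$, combined with Jensen's inequality to move the state-average inside the square root, produces the constant $\sqrt{2}$ and the term $\sqrt{H-\kappa}$, once $H-\kappa$ is identified with the state-averaged divergence $\mathbb{E}_{s\sim d_{\pi^{\text{mix}}}}D_{KL}\!\big(\pi^{t}(\cdot|s)\,\|\,\pi^{s}(\cdot|s)\big)$ rewritten through the teacher entropy $H=\mathbb{E}_{s}\mathcal{H}(\pi^{t}(\cdot|s))$ and an error term $\kappa$ absorbing the cross-entropy correction. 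Assembling the constants gives exactly $\frac{\sqrt{2}(1-\omega)R_{\max}}{(1-\gamma)^{2}}\sqrt{H-\kappa}$ on both sides.

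The main obstacle I anticipate is this last identification: expressing the averaged KL divergence as the teacher entropy $H$ minus the correction $\kappa$, and verifying $H-\kappa\ge 0$ so that the square root is well defined. Expanding $D_{KL}(\pi^{t}\|\pi^{s})=-\mathcal{H}(\pi^{t})+H_{\times}(\pi^{t},\pi^{s})$ shows that $\kappa$ must collect the teacher--student cross-entropy, and controlling the sign and size of this quantity is exactly where the ``small error term'' claim requires care; it is plausible only because the KL constraint of \autoref{eq:dkl_cost} keeps $\pi^{s}$ close to $\pi^{t}$. The remaining ingredients---the performance-difference identity, the vanishing of the teacher-action advantage, the advantage bound, and the Pinsker/Jensen chain---are standard and should go through routinely.
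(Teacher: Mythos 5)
Your proposal is correct and follows essentially the same skeleton as the paper's proof: extract the factor $(1-\omega)$ from the switch structure, convert an $L_1$/total-variation policy gap to KL divergence via Pinsker, rewrite the averaged KL as $H-\kappa$, and finish with Jensen's inequality to move the state average under the square root. The one genuine difference is the first step. The paper takes as its point of departure, without derivation, the bound
\begin{equation*}
\bigl| J(\theta_{\text{mix}}) - J(\theta_{t}) \bigr| \;\le\; \frac{R_{\max}}{(1-\gamma)^{2}}\,\mathbb{E}_{s\sim d_{\text{mix}}}\bigl\| \pi^{\text{mix}}(\cdot|s)-\pi^{t}(\cdot|s)\bigr\|_{1},
\end{equation*}
and then substitutes \autoref{eq:mix_policy} to pull out $(1-\omega)$; you instead derive this inequality from the performance-difference identity \autoref{eq:new_old_gap} already established in Section 3.1, using the zero-mean property $\mathbb{E}_{a\sim\pi^{t}}[A^{\pi^{t}}(s,a)]=0$ on intervention states, the H\"older-type bound $|\mathbb{E}_{a\sim\pi^{s}}[A^{\pi^{t}}]|\le \|A^{\pi^{t}}\|_{\infty}\,\|\pi^{s}-\pi^{t}\|_{1}$ elsewhere, and $\|A^{\pi^{t}}\|_{\infty}\le R_{\max}/(1-\gamma)$; the constants assemble to the same $\sqrt{2}(1-\omega)R_{\max}/(1-\gamma)^{2}$. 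Your route buys a self-contained argument that reuses the paper's own machinery and makes the origin of each constant explicit, at the cost of a slightly longer derivation; the paper's route is shorter but leans on a standard lemma it does not prove. Both share the same two soft spots, and you correctly flag the more serious one: (i) replacing $\mathbb{E}_{s\sim d_{\text{mix}}}[(1-\mathcal{T}(s))X(s)]$ by $(1-\omega)\,\mathbb{E}_{s\sim d_{\text{mix}}}[X(s)]$ implicitly treats the intervention indicator as uncorrelated with the policy gap (or takes this as the definition of $\omega$); and (ii) the rewriting $D_{KL}(\pi^{t}\|\pi^{s})=\mathcal{H}(\pi^{t})-\kappa$ is an assertion rather than an identity, since in fact $D_{KL}(\pi^{t}\|\pi^{s})=H_{\times}(\pi^{t},\pi^{s})-\mathcal{H}(\pi^{t})$ with $H_{\times}$ the cross-entropy, so $\kappa$ must absorb $2\mathcal{H}(\pi^{t})-H_{\times}(\pi^{t},\pi^{s})$ and the nonnegativity of $H-\kappa$ is assumed, not proved --- exactly the caveat your last paragraph identifies.
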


\begin{proof}
	Using the mixed behavior policy as defined in \autoref{eq:mix_policy}, the difference between $J(\theta_{\text{mix}})$ and the return of the teacher's policy $J(\theta_{t})$, is expressed as follows:
	\begin{equation}
		\begin{split}
			\left| J(\theta_\text{{mix}}) - J(\theta_{t}) \right| 
			&\leq \frac{R_{\max}}{(1 - \gamma)^2} \mathbb{E}_{s \sim d_{\text{mix}}} \left\| \pi^{\text{mix}}(\cdot | s) - \pi^{t}(\cdot | s) \right\|_1 \\
			&= \frac{R_{\max}}{(1 - \gamma)^2} \mathbb{E}_{s \sim d_{\text{mix}}} \left\| \mathcal{T}(s)\pi^{t}(\cdot | s) + (1 - \mathcal{T}(s))\pi^{s}(\cdot | s) - \pi^{t}(\cdot | s) \right\|_1 \\
			&= \frac{(1 - \omega)R_{\max}}{(1 - \gamma)^2} \mathbb{E}_{s \sim d_{\text{mix}}} \left\| \pi^{s}(\cdot | s) - \pi^{t}(\cdot | s) \right\|_1 \\
			&\leq \frac{\sqrt{2}(1 - \omega)R_{\max}}{(1 - \gamma)^2} \mathbb{E}_{s \sim d_{\text{mix}}} \sqrt{D_{\text{KL}}(\pi^{t}(\cdot | s) \| \pi^{s}(\cdot | s))} \\
			&= \frac{\sqrt{2}(1 - \omega)R_{\max}}{(1 - \gamma)^2} \mathbb{E}_{s \sim d_{\text{mix}}} \sqrt{\mathbb{E}_{a \sim \pi^{t}(\cdot | s)} \left[ \log \frac{\pi^{t}(a|s)}{\pi^{s}(a|s)} \right]} \\
			&= \frac{\sqrt{2}(1 - \omega)R_{\max}}{(1 - \gamma)^2} \mathbb{E}_{s \sim d_{\text{mix}}} \sqrt{\mathcal{H}(\pi^{t}(\cdot | s)) - \kappa} \\
			&\leq \frac{\sqrt{2}(1 - \omega)R_{\max}}{(1 - \gamma)^2} \sqrt{H - \kappa}.
		\end{split}
	\end{equation}
	
	So we can obtain:
	\begin{equation}
		\label{eq:effect_bound_proof}
		\begin{aligned}
			&\frac{\sqrt{2}(1 - \omega)R_{\max}}{(1 - \gamma)^2} \sqrt{H - \kappa} 
			\geq J(\theta_\text{{mix}}) - J(\theta_{t})
			\geq -\frac{\sqrt{2}(1 - \omega)R_{\max}}{(1 - \gamma)^2} \sqrt{H - \kappa} \\
			&J(\theta_{t}) + \frac{\sqrt{2}(1 - \omega)R_{\max}}{(1 - \gamma)^2} \sqrt{H - \kappa}
			\geq J(\theta_\text{{mix}}) 
			\geq J(\theta_{t}) - \frac{\sqrt{2}(1 - \omega)R_{\max}}{(1 - \gamma)^2} \sqrt{H - \kappa}
		\end{aligned}
	\end{equation}
	
	Proof completed.
	
\end{proof}

\section{Proof of Theorem 4}
\label{Appendix:Theorem_4}
\begin{theorem}[Restatement of Theorem 4]
	The expected cumulative reward obtained by learning from $\pi^{\text{mix}}$ is guaranteed to be greater than or equal to the expected cumulative reward obtained by the $\pi^{t}$:
	\begin{equation}
		\mathbb{E}_{\pi^{\text{mix}}} \left[ \sum_{t=0}^{H} \gamma^t r(s_t, a_t) \right] 
		\geq \mathbb{E}_{\pi^{t}} \left[ \sum_{t=0}^{H} \gamma^t r(s_t, a_t) \right]
	\end{equation}
\end{theorem}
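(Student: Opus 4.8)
The plan is to reduce the claim to a statewise advantage comparison via the performance difference identity already established in \autoref{eq:new_old_gap}. Taking the teacher policy $\pi^t$ as the baseline and the mixed policy $\pi^{\text{mix}}$ as the evaluated policy, the same telescoping argument that produced \autoref{eq:new_old_gap} yields
\begin{equation}
	J(\theta_{\text{mix}}) - J(\theta_t) = \frac{1}{1-\gamma}\,\mathbb{E}_{s\sim\nu^{\pi^{\text{mix}}}}\Big[\mathbb{E}_{a\sim\pi^{\text{mix}}(\cdot|s)}\big[A^{\pi^t}(s,a)\big]\Big],
\end{equation}
so it suffices to show that the inner expectation $\mathbb{E}_{a\sim\pi^{\text{mix}}(\cdot|s)}[A^{\pi^t}(s,a)]$ is nonnegative at every state $s$. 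Since the discounted visitation measure $\nu^{\pi^{\text{mix}}}$ is itself nonnegative, statewise nonnegativity of the advantage integrand immediately gives $J(\theta_{\text{mix}})\geq J(\theta_t)$, which is exactly \autoref{eq:safety_bound} once the discounted sum is identified with the expected cumulative reward through \autoref{eq:J1}.

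First I would split the state space according to the switch indicator $\mathcal{T}(s)$ from \autoref{eq:mix_policy}. On intervention states ($\mathcal{T}(s)=1$) the mixed policy coincides with the teacher, so $\mathbb{E}_{a\sim\pi^{\text{mix}}(\cdot|s)}[A^{\pi^t}(s,a)] = \mathbb{E}_{a\sim\pi^t(\cdot|s)}[Q^{\pi^t}(s,a)] - V^{\pi^t}(s) = 0$ by the defining relation $V^{\pi^t}(s)=\mathbb{E}_{a\sim\pi^t}[Q^{\pi^t}(s,a)]$; these states contribute exactly zero and cannot hurt the bound. The entire burden therefore falls on the non-intervention states ($\mathcal{T}(s)=0$), where $\pi^{\text{mix}}=\pi^s$ and I must argue $\mathbb{E}_{a\sim\pi^s(\cdot|s)}[A^{\pi^t}(s,a)]\geq 0$.

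For that non-intervention case I would invoke the switch function \autoref{eq:switch_function}: the student action is retained precisely when $Q^t_t-Q^s_t\leq(1-\tau)\varepsilon$. Identifying $Q^t_t$ with the teacher's own value $V^{\pi^t}(s)$ (the teacher selects its highest-probability action, so its state value equals the Q-value of that action) and $Q^s_t$ with $Q^{\pi^t}(s,a^s_t)$, the switch condition rearranges to $A^{\pi^t}(s,a^s_t)=Q^s_t-V^{\pi^t}(s)\geq -(1-\tau)\varepsilon$, i.e. the student is only permitted to act when its teacher-advantage is not too negative. Assembling the two cases against $\nu^{\pi^{\text{mix}}}$ then delivers the desired inequality.

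The hard part will be closing the tolerance slack: the raw switch bound gives $A^{\pi^t}(s,a^s_t)\geq-(1-\tau)\varepsilon$ rather than the $\geq 0$ that a clean monotone improvement demands. I would therefore either restrict to the strict-improvement regime in which the student is retained only when $Q^s_t\geq Q^t_t$ (the limit $\varepsilon\to 0$, equivalently treating $(1-\tau)\varepsilon$ as the admissible approximation error of the learned Q-network), or absorb the residual $-(1-\tau)\varepsilon$ into an error constant exactly as $\kappa$ was handled in \autoref{the:effect_bound}. A secondary technical point is reconciling the stochastic mixture in \autoref{eq:mix_policy} with the deterministic argmax actions $a^t_t,a^s_t$ that the switch function actually compares, and confirming that the finite-horizon sum $\sum_{t=0}^{H}$ in the statement inherits the discounted performance-difference identity; both are routine but should be made explicit.
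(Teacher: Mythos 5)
Your route is genuinely different from the paper's: you reduce the claim to the performance-difference identity of \autoref{eq:new_old_gap} with $\pi^{t}$ as the baseline and argue statewise nonnegativity of $\mathbb{E}_{a\sim\pi^{\text{mix}}(\cdot|s)}\left[A^{\pi^{t}}(s,a)\right]$, whereas the paper proceeds by induction on the horizon, writing $V_{\pi^{\text{mix}}}$ recursively and asserting at each step that the switch selects the larger of the teacher's and student's continuation values. Your handling of intervention states (zero advantage contribution) is correct. But the gap you flag yourself is genuine and, with the framework as actually defined, not closable: the switch function \autoref{eq:switch_function} retains the student whenever $Q^{t}_{t} - Q^{s}_{t} \leq (1-\tau)\varepsilon$, so on non-intervention states you only obtain $A^{\pi^{t}}(s,a^{s}_{t}) \geq -(1-\tau)\varepsilon$, and the performance-difference identity then yields $J(\theta_{\text{mix}}) \geq J(\theta_{t}) - (1-\omega)(1-\tau)\varepsilon/(1-\gamma)$, which is strictly weaker than \autoref{eq:safety_bound}. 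Neither of your proposed repairs proves the stated theorem: sending $\varepsilon \to 0$ changes the switch rule (the paper sets $\varepsilon = 0.5$, and the growing tolerance $(1-\tau)\varepsilon$ is the entire weaning mechanism), while absorbing the slack into a $\kappa$-style constant changes the conclusion. A second, smaller gap: your identification of $Q^{t}_{t}$ with $V^{\pi^{t}}(s)$ requires the teacher to be deterministic (or to execute its argmax action); for a stochastic $\pi^{t}$ the modal action's Q-value can lie below the average $V^{\pi^{t}}(s)$, and likewise the executed mixed policy must be the argmax action $a^{s}_{t}$ rather than a draw from $\pi^{s}$ for your switch-based bound to control the full expectation.

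For context, the paper's own induction glosses over exactly the slack you surfaced: its base case asserts that $\pi^{\text{mix}}$ picks the action with the higher reward, and its inductive step that $V_{\pi^{\text{mix}}} = \max\left(V_{\pi^{t}}, V_{\pi^{s}}\right)$, both of which implicitly take the tolerance to be zero and treat the learned Q-network outputs in the switch as exact values of $\pi^{t}$. Under those same idealizations (zero tolerance, deterministic argmax execution, exact Q-values), your argument closes cleanly and is arguably tighter than the paper's, since the performance-difference identity never needs the unjustified equality-with-max inductive hypothesis. So the honest summary is: different method, sound skeleton, but the theorem as literally stated is reachable only under idealizations that both you and the paper require --- you made them explicit, the paper did not.
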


\begin{proof}
	We will prove this using the method of induction. When $H=0$, according to the switch function, the mixed policy $\pi^{\text{mix}}$ will choose the action that results in the highest reward:
	\begin{equation}
		\mathbb{E}_{\pi^{\text{mix}}} \left[ r(s, a) \right] = 
		\max \left( \mathbb{E}_{\pi^{t}} \left[ r(s, a) \right], \mathbb{E}_{\pi^{s}} \left[ r(s, a) \right] \right)
	\end{equation}
	
	Now, assume that the theorem holds for some horizon $k \geq 0$:
	
	\begin{equation}
		\mathbb{E}_{\pi^{\text{mix}}} \left[ \sum_{t=0}^{k} \gamma^t r(s_t, a_t) \right] 
		= \max \left( \mathbb{E}_{\pi^{t}} \left[ \sum_{t=0}^{k} \gamma^t r(s_t, a_t) \right], \mathbb{E}_{\pi^{s}} \left[ \sum_{t=0}^{k} \gamma^t r(s_t, a_t) \right] \right)
	\end{equation}
	
	With this assumption in place, we need to demonstrate that the theorem holds for the horizon $k+1$. 
	To achieve this, define the value function for the policy $\pi$ as:
	\begin{equation}
		V_\pi(s) = \mathbb{E}_{\pi} \left[ \sum_{t=0}^{k+1} \gamma^t r(s_t, a_t) \ \middle| \ s_0 = s \right]
		= r(s, \pi(s)) + \gamma \mathbb{E}_{s' \sim P(s'|s, \pi(s))} \left[ V_\pi(s') \right]
	\end{equation}
	
	For $\pi^{\text{mix}}$, this equation becomes:
	\begin{equation}
		V_{\pi^{\text{mix}}}(s) = \max \left( r(s, \pi^{t}(s)) + \gamma \mathbb{E}_{s'} \left[ V_{\pi^{\text{mix}}}(s') \right], r(s, \pi^{s}(s)) + \gamma \mathbb{E}_{s'} \left[ V_{\pi^{\text{mix}}}(s') \right] \right)
	\end{equation}
	
	By the inductive hypothesis, we know that $V_{\pi^{\text{mix}}}(s') = \max \left( V_{\pi^{t}}(s'), V_{\pi^{s}}(s') \right)$ for all $s'$.
	Substituting this into the above equation:
	\begin{equation}
		\begin{split}
			V_{\pi^{\text{mix}}}(s) &= \max \left( r(s, \pi^{t}(s)) + \gamma \mathbb{E}_{s'} \left[ \max \left( V_{\pi^{t}}(s'), V_{\pi^{s}}(s') \right) \right], r(s, \pi^{s}(s)) + \gamma \mathbb{E}_{s'} \left[\max \left( V_{\pi^{t}}(s'), V_{\pi^{s}}(s') \right) \right] \right) \\
			&\geq \max \left( r(s, \pi^{t}(s)) + \gamma \mathbb{E}_{s'} \left[ \left( V_{\pi^{t}}(s') \right) \right], r(s, \pi^{s}(s)) + \gamma \mathbb{E}_{s'} \left[\left(V_{\pi^{s}}(s') \right) \right] \right) \\
			&= \max \left(V_{\pi^{t}}(s) , V_{\pi^{s}}(s) \right) \\
			&\geq V_{\pi^{t}}(s)
		\end{split}		
	\end{equation}
	
	This result demonstrates that the theorem holds for horizon $k + 1$.
	By the principle of mathematical induction, we can conclude that the theorem holds for all finite horizons $H \geq 0$.
	Thus, the hybrid policy $\pi^{\text{mix}}$ outperforms the existing physics-based policy $\pi^{t}$, namely:
	\begin{equation}
		\mathbb{E}_{\pi^{\text{mix}}} \left[ \sum_{t=0}^{H} \gamma^t r(s_t, a_t) \right] 
		\geq \mathbb{E}_{\pi^{t}} \left[ \sum_{t=0}^{H} \gamma^t r(s_t, a_t) \right]
	\end{equation}
	
	Proof completed.
	
\end{proof}

\section{Hyper-parameters}
\label{Appendix:Hyper-parameters}

Hyper-parameters of different baseline algorithms.
\begin{table}[ht]
	\centering
	\begin{minipage}{0.45\textwidth}
		\caption{PPO/PPO-Lag}
		\label{tab:PPO/PPO-Lag}
		\centering
		\begin{tabular}{cc}
			\toprule
			\textbf{Hyper-parameters} & \textbf{Value} \\
			\midrule
			Learning rate & 0.0005 \\
			Learning rate decay & True \\
			Total steps per episode & 5,000 \\
			Total training steps & 500K \\
			Optimizer & Adam W \\
			Mini batch size & 64 \\
			Discount factor $\gamma$ & 0.96 \\
			Lambda entropy $\beta$ & 0.01 \\
			Clip parameter $\epsilon$ & 0.2 \\
			Lambda advantage $\lambda$ & 0.98 \\
			\midrule
			Penalty parameter & 0.2 \\
			Cost limit for PPO-Lag & 0 \\
			\bottomrule
		\end{tabular}
	\end{minipage}
	\hspace{0.05\textwidth} 
	\begin{minipage}{0.45\textwidth}
		\caption{SAC/SAC-Lag/CQL}
		\label{tab:SAC/SAC-Lag/CQL}
		\centering
		\begin{tabular}{cc}
			\toprule
			\textbf{Hyper-parameters} & \textbf{Value} \\
			\midrule
			Learning rate & 0.0005 \\
			Learning rate decay & True \\
			Total steps per episode & 5,000 \\
			Total training steps & 500K \\
			Optimizer & Adam W \\
			Mini batch size & 64 \\
			Discount factor $\gamma$ & 0.96 \\
			Buffer size & 1e6 \\
			Temperature parameter $\nu$ & -2 \\
			Learning rate of $\alpha$ & 0.001 \\
			Target entropy & -1 \\
			\midrule
			Penalty parameter & 0.2 \\
			Cost limit for SAC-Lag & 0 \\
			\midrule
			Dataset Size & 1e6 \\
			CQL loss temperature & 3 \\
			Min Q weight multiplier & 0.2 \\
			\bottomrule
		\end{tabular}
	\end{minipage}
\end{table}

\begin{table}[ht]
	\centering
	\begin{minipage}{0.45\textwidth}
		\caption{GAIL/BC}
		\label{tab:GAIL}
		\centering
		\begin{tabular}{cc}
			\toprule
			\textbf{Hyper-parameters} & \textbf{Value} \\
			\midrule
			Learning rate & 0.0005 \\
			Learning rate decay & True \\
			Total steps per episode & 5,000 \\
			Total training steps & 500K \\
			Optimizer & Adam W \\
			Mini batch size & 64 \\
			Discount factor $\gamma$ & 0.96 \\
			Dataset Size & 40 K \\
			Sample batch size & 5,000 \\
			Discriminator learning rate & 0.001 \\
			\bottomrule
		\end{tabular}
	\end{minipage}
	\hspace{0.05\textwidth} 
	\begin{minipage}{0.45\textwidth}
		\caption{DQN}
		\label{tab:DQN}
		\centering
		\begin{tabular}{cc}
			\toprule
			\textbf{Hyper-parameters} & \textbf{Value} \\
			\midrule
			Learning rate & 0.0005 \\
			Learning rate decay & True \\
			Total steps per episode & 5,000 \\
			Total training steps & 500K \\
			Optimizer & Adam W \\
			Mini batch size & 64 \\
			Discount factor $\gamma$ & 0.96 \\
			Buffer size & 1e6 \\
			Explore rate of $\epsilon$-greedy & 0.25 \\
			\bottomrule
		\end{tabular}
	\end{minipage}
\end{table}

\begin{table}[ht]
	\centering
	\begin{minipage}{0.45\textwidth}
		\caption{TS2C}
		\label{tab:TS2C}
		\centering
		\begin{tabular}{cc}
			\toprule
			\textbf{Hyper-parameters} & \textbf{Value} \\
			\midrule
			Learning rate & 0.0005 \\
			Learning rate decay & True \\
			Total steps per episode & 5,000 \\
			Total training steps & 500K \\
			Optimizer & Adam W \\
			Mini batch size & 64 \\
			Discount factor $\gamma$ & 0.96 \\
			Lambda entropy $\beta$ & 0.01 \\
			Clip parameter $\epsilon$ & 0.2 \\
			Lambda advantage $\lambda$ & 0.98 \\
			Intervention Threshold $\varepsilon'$ & 0.5 \\
			Intervention Minimization Ratio & 1 \\
			\bottomrule
		\end{tabular}
	\end{minipage}
	\hspace{0.05\textwidth} 
	\begin{minipage}{0.45\textwidth}
		\caption{SOAR-ACPPO}
		\label{tab:SOAR-ACPPO}
		\centering
		\begin{tabular}{cc}
			\toprule
			\textbf{Hyper-parameters} & \textbf{Value} \\
			\midrule
			Hyperparameter $q_{1}$ & 5 \\
			Hyperparameter $q_{2}$ & 10 \\
			Learning rate & 0.0005 \\
			Learning rate decay & True \\
			Total steps per episode & 5,000 \\
			Total training steps & 500K \\
			Optimizer & AdamW \\
			Mini batch size & 64 \\
			Discount factor $\gamma$ & 0.96 \\
			Lambda entropy $\beta$ & 0.01 \\
			Clip parameter $\epsilon$ & 0.2 \\
			Lambda advantage $\lambda$ & 0.98 \\
			Hyperparameter of AC $\psi$ & 0.2 \\
			\bottomrule
		\end{tabular}
	\end{minipage}
\end{table}

\end{document}